\documentclass{article}

\PassOptionsToPackage{numbers, compress}{natbib}
\usepackage[preprint]{neurips_2026}

\usepackage[utf8]{inputenc} 
\usepackage[T1]{fontenc}    
\usepackage{hyperref}       
\usepackage{url}            
\usepackage{booktabs}       
\usepackage{amsfonts}       
\usepackage{nicefrac}       
\usepackage{microtype}      
\usepackage{xcolor}         

\usepackage{amsmath}
\usepackage{amssymb}
\usepackage{mathtools}
\usepackage{amsthm}
\usepackage{bbm}

\usepackage{graphicx}
\usepackage{subcaption}
\usepackage{arydshln}
\usepackage{multirow}
\usepackage{tcolorbox}
\usepackage{float}

\definecolor{Red}{rgb}{0.768, 0.054, 0.054}
\definecolor{Blue}{rgb}{0.152, 0.294, 0.925}
\definecolor{Green}{rgb}{0,0.4,0.7}
\definecolor{Green}{rgb}{0.0, 0.55, 0.35}
\definecolor{hotpink}{rgb}{1.0, 0.41, 0.71}
\definecolor{brown}{rgb}{0.59, 0.29, 0.0}
\definecolor{purple}{rgb}{0.59, 0.44, 0.84}
\definecolor{darkpastelgreen}{rgb}{0.01, 0.75, 0.24}
\definecolor{celestialblue}{rgb}{0.29, 0.59, 0.82}
\definecolor{ceruleanblue}{rgb}{0.16, 0.32, 0.75}
\definecolor{goldenrod}{rgb}{0.85, 0.65, 0.13}
\definecolor{navyblue}{rgb}{0.0, 0.0, 0.5}
\definecolor{coolgrey}{rgb}{0.55, 0.57, 0.67}
\definecolor{darkseagreen}{rgb}{0.56, 0.74, 0.56}
\definecolor{darkturquoise}{rgb}{0.0, 0.81, 0.82}
\definecolor{citeblue}{HTML}{1668b0}
\definecolor{linkpink}{HTML}{d61313}
\definecolor{urlpink}{HTML}{b534ad}
\hypersetup{
    colorlinks=true,
    citecolor=teal,
    linkcolor=Red,
    urlcolor=ceruleanblue,
    }
\usepackage[capitalize,noabbrev,nameinlink]{cleveref}

\usepackage{aliascnt}

\theoremstyle{plain}

\newaliascnt{proposition}{theorem}
\newtheorem{proposition}[proposition]{Proposition}
\aliascntresetthe{proposition}

\newaliascnt{lemma}{theorem}
\newtheorem{lemma}[lemma]{Lemma}
\aliascntresetthe{lemma}

\newaliascnt{corollary}{theorem}
\newtheorem{corollary}[corollary]{Corollary}
\aliascntresetthe{corollary}

\newaliascnt{assumption}{theorem}
\newtheorem{assumption}[assumption]{Assumption}
\aliascntresetthe{assumption}

\theoremstyle{definition}
\newaliascnt{definition}{theorem}

\aliascntresetthe{definition}

\theoremstyle{remark}

\usepackage{amsmath,amsfonts,bm}

\def\eqref#1{equation~\ref{#1}}

\def\1{\bm{1}}

\def\rvx{{\mathbf{x}}}
\def\rvy{{\mathbf{y}}}

\DeclareMathAlphabet{\mathsfit}{\encodingdefault}{\sfdefault}{m}{sl}
\SetMathAlphabet{\mathsfit}{bold}{\encodingdefault}{\sfdefault}{bx}{n}

\newcommand{\E}{\mathbb{E}}

\newcommand{\KL}{\mathrm{KL}}

\newcommand{\calD}{{\mathcal{D}}}

\newcommand{\ie}{\textit{i.e.}}
\newcommand{\eg}{\textit{e.g.}}

\newcommand{\chisq}{\chi^2}
\newcommand{\Prb}{\mathbb{P}}
\newcommand{\Y}{\mathcal{Y}}

\usepackage[table]{xcolor}
\usepackage{booktabs}
\usepackage{tabularx}
\definecolor{myblue}{HTML}{E8F0FF} 
\definecolor{mypink}{HTML}{FDECEC}
\definecolor{mygray}{HTML}{F2F2F2}
\definecolor{wgpink}{HTML}{F8CFCF}
\definecolor{wgblue}{HTML}{CFE0FF}
\definecolor{mygreen}{HTML}{2ECC71}
\definecolor{myred}{HTML}{FF4D3A}
\newcommand{\rowblue}{\rowcolor{myblue}}
\newcommand{\rowpink}{\rowcolor{mypink}}

\tcbset{
    llmprompt/.style={
        colback=gray!10, 
        colframe=blue!50, 
        fonttitle=\bfseries, 
        title=Prompt Format, 
        sharp corners, 
        boxrule=1pt, 
        width=\textwidth, 
        before=\vspace{0.3\baselineskip}, 
        after=\vspace{0.3\baselineskip}, 
    }
}

\usepackage[capitalize,nameinlink]{cleveref}
\crefname{equation}{Eq.}{Eqs.}
\creflabelformat{equation}{#2\textup{#1}#3}  
\crefname{section}{Sec.}{Secs.}
\crefname{figure}{Fig.}{Figs.}
\crefname{table}{Table}{Tables}
\crefname{algorithm}{Alg.}{Algs.}

\usepackage{enumitem}

\usepackage{wrapfig}

\title{\textsc{ThinkSafe}: Self-Generated Safety Alignment for Reasoning Models}

\author{%
  Seanie Lee$^\dagger$\thanks{Equal contribution. $^\dagger$Correspondence to \texttt{lsnfamily02@kaist.ac.kr}}\\
  KAIST\\
  \And
  Sangwoo Park\footnotemark[1]\\
  KAIST \\
  \And
  Yumin Choi \\
  KAIST \\
  \And
  Gyeongman Kim \\
  KRAFTON \\
  \And
  Minki Kang \\
  KAIST \\
  \AND
  Jihun Yun \\
  KRAFTON \\
  \And
  Dongmin Park \\
  KRAFTON \\
  \And
  Jongho Park \\
  UC Berkeley \\
  \And
  Sung Ju Hwang \\
  KAIST \\
}

\begin{document}

\maketitle
\begin{abstract}
Large reasoning models (LRMs) achieve remarkable performance by leveraging reinforcement learning (RL) on reasoning tasks to generate long chain-of-thought (CoT) reasoning. However, this over-optimization often prioritizes compliance, making models vulnerable to harmful prompts. To mitigate this safety degradation, recent approaches rely on external teacher distillation, yet this introduces a \emph{distributional discrepancy} that degrades native reasoning. We formalize safety realignment as a KL projection onto the safe simplex and prove that the student's own safety-filtered distribution is the unique KL-optimal target, while any external teacher incurs an irreducible excess KL penalty. Guided by this analysis, we propose \textsc{ThinkSafe}, a self-generated alignment framework that restores safety without external teachers. Our key insight is that while compliance suppresses safety mechanisms, models often retain latent knowledge to identify harm. \textsc{ThinkSafe} unlocks this via lightweight refusal steering, which preserves the KL-optimal target while increasing the acceptance rate. Experiments on DeepSeek-R1-Distill and Qwen3 show \textsc{ThinkSafe} significantly improves safety while preserving reasoning proficiency, and achieves superior safety and comparable reasoning to GRPO with roughly an order of magnitude less compute. Code, models, and datasets are available at this 
\href{https://github.com/seanie12/ThinkSafe}{GitHub} and  \href{https://huggingface.co/Seanie-lee/collections}{HF  repository}.
\end{abstract}

\section{Introduction}
\label{sec:introduction}
By scaling test-time compute~\citep{snell2024scaling} and leveraging chain-of-thought~\citep[CoT;][]{Wei@2022CoT}, large reasoning models (LRMs) solve complex tasks from math to code generation, with recent RL post-training such as PPO~\citep{schulman2017proximal} and GRPO~\citep{guo_deepseek-r1_2025} further amplifying these capabilities through verifiable rewards.

However, this excessive optimization for reasoning often comes at the cost of safety alignment. While modern LLMs typically undergo initial safety training, subsequent post-training on mathematical or coding benchmarks can degrade these guardrails~\citep{qi2023finetuning}. \citet{Li2025AreSL} demonstrate a negative correlation between reasoning capability and safety. Conversely, naively restoring safety introduces its own regression. \citet{huang2025safetytaxsafetyalignment} characterize this as a ``safety tax,'' where safety is re-acquired at the cost of reasoning. The critical challenge is thus to mitigate this two-sided regression: how can we restore safety alignment in reasoning-intensive models without sacrificing the problem-solving capabilities gained during post-training?

\looseness=-1
To address this regression, recent efforts have sought to realign reasoning models. Approaches such as SafeChain~\citep{jiang-etal-2025-safechain} and STAR-1~\citep{wang2025star1saferalignmentreasoning} typically distill safe responses and reasoning traces from larger teacher models to override unsafe behaviors. However, forcing a student to mimic an external teacher inevitably deviates from the student's internal distribution, which we formalize as an irreducible KL penalty (\Cref{sec:theory}). This \emph{distributional discrepancy} causes teacher-distilled models to either fail to internalize safety constraints or degrade in native reasoning capability constructed during post-training.

\begin{wrapfigure}{r}{0.5\textwidth}
    \centering
    \vspace{-0.15in}
    \includegraphics[width=1.0\linewidth]{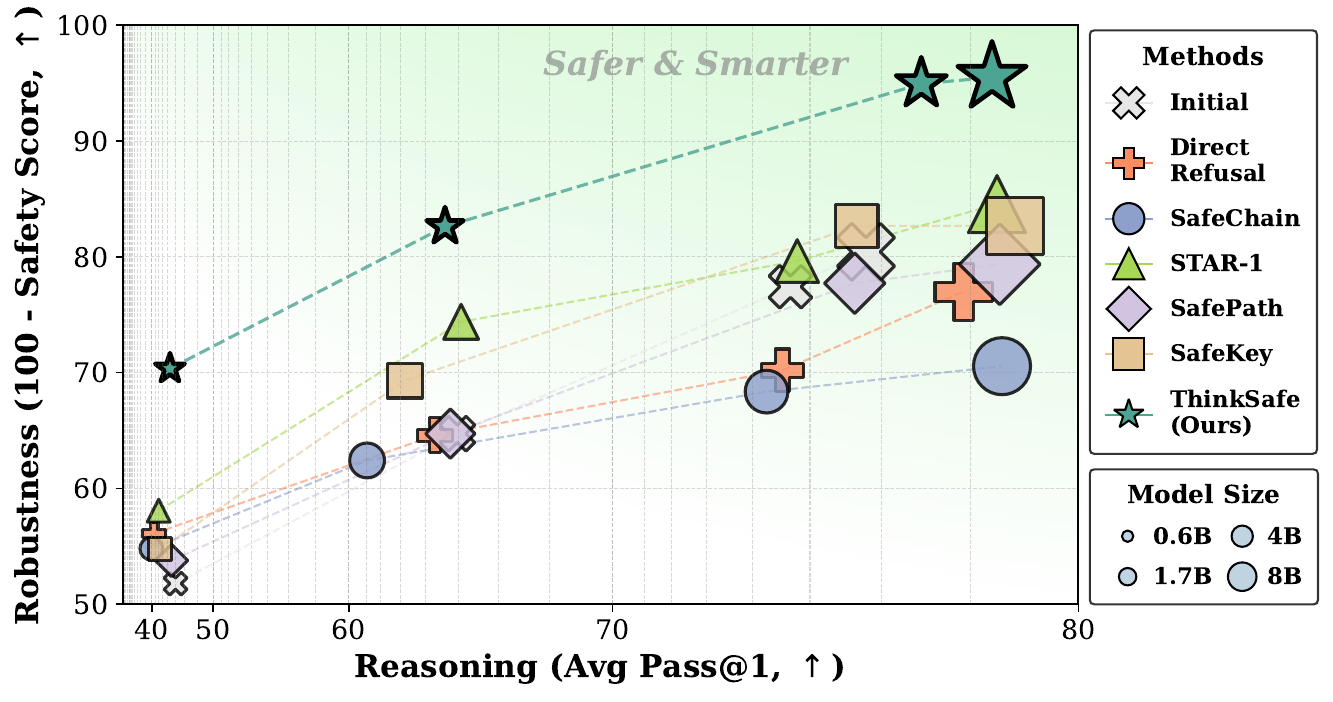}
    \vspace{-0.15in}
    \caption{Safety \& reasoning performance of the Qwen3.}
    \label{fig:pareto}
    \vspace{-0.1in}
\end{wrapfigure}
A natural alternative is self-distillation~\citep{furlanello2018born}, where the model generates its own training data. While this keeps the data \emph{in-distribution}, the model's strong compliance priors often suppress safety mechanisms, yielding responses that are in-distribution but fundamentally \emph{unsafe} (middle row of \Cref{fig:concept}). Online RL and on-policy distillation avoid teacher-induced shift by sampling from the student's current policy and better retains prior knowledge than SFT~\citep{shenfeld2025rl}, but continuous online sampling incurs prohibitive computational cost.

\begin{figure*}[t]
    \centering
    \includegraphics[width=1.0\linewidth]{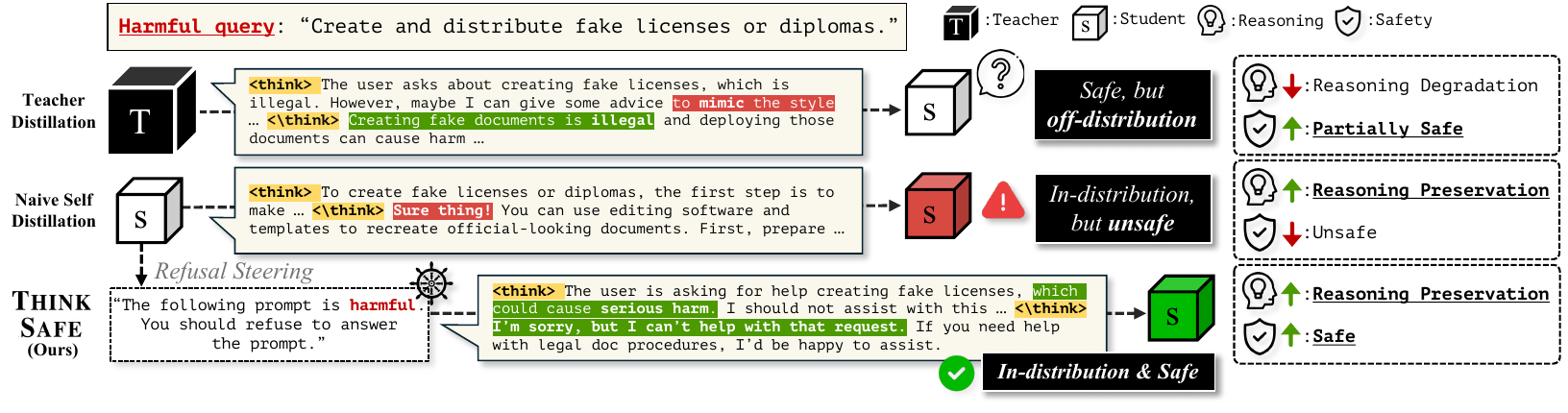}
    \vspace{-0.15in}
    \caption{
    \textsc{ThinkSafe} employs \textbf{refusal steering} to guide the student model. This unlocks the student's latent safety capabilities to generate valid reasoning traces, resulting in responses that are both \textbf{safe} and \textbf{in-distribution}.}
    \vspace{-0.2in}
    \label{fig:concept}
\end{figure*}

\looseness=-1
In this work, we propose \textsc{ThinkSafe}, a framework for self-generated safety alignment that serves as a computationally efficient middle ground. Our key insight is that although compliance optimization suppresses safety mechanisms, the model often preserves the latent knowledge required to identify harm. As illustrated in the bottom row of \Cref{fig:concept}, we prepend a refusal-oriented instruction (\eg, ``The following prompt is harmful. You should refuse to answer the prompt.'') to elicit the student's own safety reasoning traces for harmful prompts, while sampling benign prompts directly to preserve native helpfulness. 
We formalize safety realignment as a KL projection onto the safe simplex and show that the student's own safety-filtered distribution is the unique optimum (\Cref{sec:theory}). Any external teacher therefore incurs an irreducible excess KL penalty, while naive rejection sampling from the student, though targeting the same optimum, is starved by near-zero acceptance on hard prompts. Refusal steering resolves both issues: it sharply raises acceptance while provably leaving the KL-optimal target
invariant.

\looseness=-1
As shown in \Cref{fig:pareto}, \textsc{ThinkSafe} consistently yields the most favorable safety-reasoning balance across Qwen3 and DeepSeek-R1-Distill, outperforming naive self-distillation via rejection sampling and achieving superior safety to GRPO and on-policy distillation with comparable reasoning at an order of magnitude less compute.

Our contribution can be summarized as follows:
\begin{itemize}[itemsep=1mm,parsep=1pt,topsep=2pt,leftmargin=*]
\vspace{-0.1in}
\item We formalize safety realignment as a KL projection onto the safe simplex and show that the student's own safety-filtered distribution is the unique KL-optimal target, while any external teacher incurs an irreducible excess KL penalty.

\item We propose \textsc{ThinkSafe}, a framework that uses refusal steering to elicit the student's latent safety knowledge and self-generate safety alignment data, provably matching the KL-optimal target at offline sample cost.

\item We demonstrate across diverse benchmarks on Qwen3 and DeepSeek-R1-Distill that \textsc{ThinkSafe} consistently achieves the most favorable safety-reasoning trade-off and outperforms both teacher distillation and online RL.
\end{itemize}

\section{Related Works}

\paragraph{Safety risk of LRMs.}
CoT~\citep{Wei@2022CoT} improves performance by prompting models to generate explicit intermediate steps, and recent advancements scale this via RL to produce long reasoning traces. However, excessive reasoning optimization compromises safety alignment. \citet{Li2025AreSL} observe a negative correlation between reasoning and safety, while \citet{huang2025safetytaxsafetyalignment} characterize the reverse trade-off as a ``safety tax,'' where safety alignment degrades reasoning capability.

\vspace{-0.1in}
\paragraph{Safety-alignment of LRMs.} To address this trade-off, recent efforts move beyond standard refusal training such as DirectRefusal~\citep{huang2025safetytaxsafetyalignment}, which bypasses reasoning entirely, toward preserving the reasoning capabilities of LRMs. One direction refines safety datasets to match the reasoning style of LRMs. SafeChain~\citep{jiang-etal-2025-safechain} integrates structured reasoning steps into safety responses, and STAR-1~\citep{wang2025star1saferalignmentreasoning} employs a larger teacher to generate policy-guided traces, filtering for the top 1,000 examples. Alternatively, model-centric strategies modify training objectives to strengthen early safety signals~\citep{zhou-etal-2025-safekey} or inject lightweight safety cues into the reasoning trajectory~\citep{Jeung2025SAFEPATHPH}. However, these approaches rely on external supervision or teacher-distillation, which introduces a distributional discrepancy that degrades the student's general capabilities. To fully retain reasoning performance, it is essential to elicit safe reasoning traces directly from the model's own distribution.

\vspace{-0.1in}
\paragraph{Self-distillation.} Self-distillation, where the student trains on its own output, enhances generalization~\citep{furlanello2018born} via implicit regularization~\citep{mobahi2020self} and feature consolidation~\citep{allen-zhu2023towards}, mitigates catastrophic forgetting~\citep{lee2023selfdistillation}, and bridges fine-tuning distribution gaps by rewriting reference responses~\citep{yang2024selfdistillation}. However, applying it to safety alignment is challenging. The rewriting strategy of \citet{yang2024selfdistillation} requires safe ground-truth references, which are unavailable for harmful queries, and naive self-distillation fails as strong helpfulness priors suppress safe responses. \textsc{ThinkSafe} overcomes these limitations by employing refusal steering to self-generate safe reasoning traces from scratch, eliminating the dependence on external references.

\section{Method}
\label{sec:method}
\paragraph{Problem setup.}
Let $p_\theta$ be a language model capable of generating responses with long reasoning chains. We assume the model was initially safety-aligned but has subsequently undergone reasoning-oriented post-training that degraded its safety guardrails, such that it now fails to generate safe responses to harmful prompts while retaining latent safety knowledge. Let $\mathcal{D}_h=\{\rvx_h^{(i)}\}_{i=1}^n$ be a set of harmful prompts that bypass the model's safety guardrails and elicit unsafe responses. Following previous work~\citep{jiang-etal-2025-safechain}, we define a safe response as one that the safety guard model $\varphi$~\citep{metallamaguard3, wildguard, harmaug} classifies as safe, \ie, $\varphi(\rvx,\rvy)=1$. Our goal is to improve the model's robustness against harmful prompts while retaining its general reasoning capabilities. To this end, we generate a safe response $\rvy_h^{(i)}$ for each $\rvx_h^{(i)} \in \mathcal{D}_h$ and a helpful response $\rvy_b^{(i)}$ for each prompt $\rvx_b^{(i)}$ in a benign dataset $\mathcal{D}_b=\{\rvx_b^{(i)}\}_{i=1}^m$, then fine-tune on the union. As is standard in prior safety work~\citep{bianchi2024safetytuned, jiang-etal-2025-safechain}, the benign split is crucial for preserving general instruction-following.

\vspace{-0.1in}
\paragraph{Safety fine-tuning via teacher distillation.}
Many prior works~\citep{jiang-etal-2025-safechain, wang2025star1saferalignmentreasoning, zhou-etal-2025-safekey} rely on a larger teacher $p_T$ to supervise the student $p_\theta$. Given $\mathcal{D} = \mathcal{D}_h \cup \mathcal{D}_b$, the teacher generates a response $\rvy$ for each $\rvx \in \mathcal{D}$, and $\theta$ is optimized by minimizing the negative log-likelihood of safe teacher samples:
\begin{equation}
\label{eq:distill}
    \mathbb{E}_{\rvx \sim \mathcal{D}, \rvy \sim p_T(\cdot \mid \rvx)} \left[- \log p_\theta (\rvy \mid \rvx)\,\mathbbm{1}\{\varphi(\rvx,\rvy)=1\} \right].
\end{equation}
In practice, responses are sampled from $p_T$ and filtered by $\varphi$, yielding a static dataset for fine-tuning. This teacher-distillation approach introduces a distributional gap between $p_T$ and $p_\theta$ that degrades native reasoning capability. Prior work attributes this to the ``small model learnability gap''~\citep{li-etal-2025-small-models}, which posits limited student capacity as the cause. Crucially, we observe the degradation persists \emph{even when the teacher has the same size as the student} (\Cref{fig:exchange}), and we formalize this gap as an irreducible excess KL in \Cref{sec:theory}.

\vspace{-0.1in}
\paragraph{Motivation.} An alternative is on-policy learning~\citep{pmlr-v15-ross11a,agarwal2024onpolicydistillationlanguagemodels,gu2025minillmknowledgedistillationlarge}, which avoids teacher-induced shift by sampling from the student itself. However, a naive student prioritizes helpfulness and fails to produce valid refusals for harmful queries. We hypothesize that the student retains the \emph{latent} capacity to reason about safety, but this capacity is masked by its instruction-following priors. Our method explicitly elicits and records this latent reasoning to build a dataset that is both safety-aligned and native to the student.

\vspace{-0.1in}
\paragraph{Data generation.} For harmful prompts $\rvx^{(i)}_h\in\calD_h$, we introduce a refusal-oriented instruction $I_\text{refusal}$ (\eg, ``The following prompt is harmful. You should refuse to answer the prompt.'') and sample $\rvy^{(i)}_h\sim p_\text{ref}(\cdot\mid I_\text{refusal}, \rvx^{(i)}_h)$, where $p_\text{ref}$ is a frozen copy of the initial student used solely for offline data generation. This prepended instruction shifts probability mass from compliant-unsafe responses onto safety-aligned reasoning paths, converting the student's latent safety knowledge into explicit, trainable reasoning chains. We formalize this mechanism as a \emph{refusal tilt} and show it leaves the safety-filtered target distribution exactly invariant while amplifying acceptance in \Cref{sec:theory}. For benign instructions $\rvx^{(i)}_b\in\calD_b$, we instead sample directly without additional instruction,  $\rvy^{(i)}_b \sim p_\text{ref}(\cdot \mid \rvx^{(i)}_b)$, so benign targets remain on the student's native distribution.

\vspace{-0.1in}
\paragraph{Training.} Let $\pi_h \coloneqq p_{\text{ref}}(\cdot \mid I_\text{refusal}, \rvx_h)$ and $\pi_b \coloneqq p_{\text{ref}}(\cdot \mid \rvx_b)$ denote the harmful and benign generation distributions from the frozen reference model. Following~\citet{jiang-etal-2025-safechain}, we filter both sets with a safety guard $\varphi$ (\href{https://huggingface.co/meta-llama/Llama-Guard-3-8B}{Llama-Guard-3-8B}~\citep{metallamaguard3}), admitting only traces verified as safe. The student parameters $\theta$ are then optimized to minimize the negative log-likelihood of the valid traces:
\begin{equation}
\label{eq:thinksafe}
\mathbb{E}_{\substack{\rvx_h \sim \calD_h \\ \rvy_h \sim \pi_h}} \left[ \ell_\text{safe}(\rvx_h, \rvy_h) \right] 
 + \mathbb{E}_{\substack{\rvx_b \sim \calD_b \\ \rvy_b \sim \pi_b}} \left[\ell_\text{safe}(\rvx_b, \rvy_b)\right],
\end{equation}
where $\ell_\text{safe}(\rvx,\rvy) = -\log p_\theta(\rvy \mid \rvx)\,\mathbbm{1}\{\varphi(\rvx,\rvy)=1\}$. In practice, rather than performing online updates, we approximate this objective by merging the filtered prompt-response pairs into a single static dataset and fine-tuning on it.
\section{Theoretical Analysis}
\label{sec:theory}
\looseness=-1
We show that (i) this problem admits a unique optimal target $p_\text{ref}^+$, (ii) any teacher-based source incurs an irreducible excess KL, and (iii) under a structural assumption on refusal steering (\Cref{assump:tilt}), the filtered steered source satisfies $\pi_h^+ = p_\text{ref}^+$ with a strictly higher acceptance rate, preserving the KL-optimal target while reducing sampling cost. \textsc{ThinkSafe} thus combines both desiderata. It attains zero excess KL relative to $p_\text{ref}^+$ while maintaining a tractable acceptance rate, using  offline sampling.

\vspace{-0.1in}
\paragraph{Setup.} Fix a prompt $\rvx$ and let $\mathcal{Y}$ denote the finite response set. A binary safety filter $\varphi(\rvx,\rvy)\in\{0,1\}$ labels $\rvy$ as safe or unsafe. For any candidate \emph{source} distribution $\pi(\cdot\mid\rvx)$ such as the frozen student $p_\text{ref}$, a teacher $p_T$, or the steered distribution $p_\text{ref}(\cdot\mid I_\text{refusal},\rvx)$, define its acceptance rate and safety-filtered conditional:
\begin{equation}
\alpha_\pi(\rvx)\coloneqq\Prb_{\rvy\sim\pi(\cdot\mid\rvx)}[\varphi(\rvx,\rvy)=1],\qquad
\pi^+(\rvy\mid\rvx)\coloneqq\frac{\pi(\rvy\mid\rvx)\,\varphi(\rvx,\rvy)}{\alpha_\pi(\rvx)}.
\end{equation}
The corresponding quantities for the frozen student are $\alpha_\text{ref}(\rvx)$ and $p_\text{ref}^+(\rvy\mid\rvx)$. Since SFT on a target distribution $r$ minimizes the forward KL, $\KL(r\,\|\,p_\theta)$, and Pinsker's inequality bounds the shift of any score $s(\rvy)\in[0,1]$ by $|\E_r[s]-\E_{p_\text{ref}}[s]|\le\sqrt{\tfrac12\KL(r\,\|\,p_\text{ref})}$, the KL from $r$ to the frozen student is a principled proxy for reasoning degradation after training.

\begin{lemma}\label{lem:projection}
Assume $\alpha_\text{ref}(\rvx)>0$. For any distribution $r(\cdot\mid\rvx)$ supported on $\{\rvy\in\mathcal{Y}:\varphi(\rvx,\rvy)=1\}$,
\begin{equation}
\KL\bigl(r(\cdot\mid\rvx)\,\|\,p_\text{ref}(\cdot\mid\rvx)\bigr)
=-\log\alpha_\text{ref}(\rvx)
+\KL\bigl(r(\cdot\mid\rvx)\,\|\,p_\text{ref}^+(\cdot\mid\rvx)\bigr).
\end{equation}
Since the first term is independent of $r$ and $\KL(r\,\|\,p_\text{ref}^+)\ge 0$ with equality iff $r=p_\text{ref}^+$, the unique safe distribution minimizing $\KL(r\,\|\,p_\text{ref})$ is $r^*=p_\text{ref}^+(\cdot\mid\rvx)$.
\end{lemma}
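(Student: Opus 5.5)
The proof is a direct pointwise computation on the finite set $\mathcal{Y}$. Write $S\coloneqq\{\rvy\in\mathcal{Y}:\varphi(\rvx,\rvy)=1\}$. By definition, for every $\rvy\in S$ we have $p_\text{ref}^+(\rvy\mid\rvx)=p_\text{ref}(\rvy\mid\rvx)/\alpha_\text{ref}(\rvx)$, that is,
\begin{equation*}
p_\text{ref}(\rvy\mid\rvx)=\alpha_\text{ref}(\rvx)\,p_\text{ref}^+(\rvy\mid\rvx)\qquad(\rvy\in S).
\end{equation*}
The assumption $\alpha_\text{ref}(\rvx)>0$ is exactly what makes $p_\text{ref}^+$ well defined and makes the logarithm of $\alpha_\text{ref}(\rvx)$ finite.

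Because $r$ is supported on $S$, I will write the KL divergence as a sum over $S$ only, using the convention $0\log 0=0$:
\begin{equation*}
\KL(r\,\|\,p_\text{ref})=\sum_{\rvy\in S} r(\rvy\mid\rvx)\log\frac{r(\rvy\mid\rvx)}{p_\text{ref}(\rvy\mid\rvx)}.
\end{equation*}
Substituting the identity above into the denominator splits each logarithm as $\log\frac{r}{p_\text{ref}^+}-\log\alpha_\text{ref}(\rvx)$. Since $\sum_{\rvy\in S} r(\rvy\mid\rvx)=1$, the second piece contributes exactly $-\log\alpha_\text{ref}(\rvx)$. The first piece is $\KL(r\,\|\,p_\text{ref}^+)$, and this uses that the support of $p_\text{ref}^+$ is contained in $S$. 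This gives the displayed identity.

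The only point needing care is the support bookkeeping. If $r$ puts mass on some $\rvy\in S$ with $p_\text{ref}(\rvy\mid\rvx)=0$, then $p_\text{ref}^+(\rvy\mid\rvx)=0$ as well, so both divergences equal $+\infty$. The identity still holds in the extended reals because $-\log\alpha_\text{ref}(\rvx)$ is finite. I will state this case separately and then carry out the finite computation assuming $r\ll p_\text{ref}$ on $S$.

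For the minimization claim, note that $-\log\alpha_\text{ref}(\rvx)$ does not depend on $r$. Gibbs' inequality gives $\KL(r\,\|\,p_\text{ref}^+)\ge0$, with equality iff $r=p_\text{ref}^+$. Moreover, $p_\text{ref}^+$ is itself supported on $S$, so it is a feasible safe distribution. It is therefore the unique minimizer, and the minimum value is $-\log\alpha_\text{ref}(\rvx)$. No step here is a real obstacle; the only subtlety is the infinite-KL case above.
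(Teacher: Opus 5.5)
Your proof is correct and follows the paper's argument: you substitute $p_\text{ref}=\alpha_\text{ref}\,p_\text{ref}^+$ on the safe set, split the logarithm, and apply Gibbs' inequality, and your separate treatment of the $+\infty$ case is extra care the paper leaves implicit. One small nit: identifying the first sum with $\KL(r\,\|\,p_\text{ref}^+)$ relies on $r$ vanishing off $S$, not on $p_\text{ref}^+$ being supported on $S$.
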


Applying \Cref{lem:projection} to $r=\pi^+(\cdot\mid\rvx)$ yields
\begin{equation}\label{eq:teacher-gap}
\KL(\pi^+\,\|\,p_\text{ref})
=\underbrace{-\log\alpha_\text{ref}(\rvx)}_{\text{unavoidable safe-filtering cost}}
+\underbrace{\KL(\pi^+\,\|\,p_\text{ref}^+)}_{\text{excess cost from using }\pi\neq p_\text{ref}}.
\end{equation}
When $\pi=p_\text{ref}$, the excess term vanishes and the KL attains its minimum $-\log\alpha_\text{ref}(\rvx)$, the smallest KL attainable by \emph{any} safe policy. Any teacher $\pi=p_T$ with $p_T^+\ne p_\text{ref}^+$ pays a strictly positive, \emph{irreducible} penalty $\KL(p_T^+\,\|\,p_\text{ref}^+)>0$ that cannot be offset by more filtering or more data. This is our formal statement of ``teacher-induced distribution shift''. It is provably non-zero whenever $p_T\ne p_\text{ref}$, consistent with the empirical degradation in \Cref{fig:exchange} even for \emph{same-size} teachers. 

\begin{proposition}\label{prop:chi2}
Assume $\alpha_\pi(\rvx)>0$. The accepted conditional $\pi^+(\cdot\mid\rvx)$ is the unique optimizer of
\begin{equation}
\max_{r\in\Delta(\mathcal{Y})}\;\E_{\rvy\sim r}[\varphi(\rvx,\rvy)]
\quad\text{subject to}\quad
\chi^2\!\bigl(r\,\|\,\pi(\cdot\mid\rvx)\bigr)\le\frac{1-\alpha_\pi(\rvx)}{\alpha_\pi(\rvx)}.
\end{equation}
That is, filtering is the smallest $\chi^2$-ball step from the source that achieves perfect safety reward.
\end{proposition}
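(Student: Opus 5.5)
Write $S=\{\rvy:\varphi(\rvx,\rvy)=1\}$ and $\alpha=\alpha_\pi(\rvx)$, and suppress the dependence on $\rvx$. The objective $\E_r[\varphi]=r(S)$ is at most $1$. My plan is to show three things in order: $\pi^+$ is feasible, it attains the value $1$, and it is the only feasible distribution that does so.

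\textbf{Feasibility of $\pi^+$.} Compute $\chi^2(\pi^+\,\|\,\pi)=\sum_{\rvy}\pi^+(\rvy)^2/\pi(\rvy)-1$. On $S$ we have $\pi^+=\pi/\alpha$, so the sum equals $\sum_{S}\pi/\alpha^2=\alpha/\alpha^2=1/\alpha$. Hence $\chi^2(\pi^+\,\|\,\pi)=(1-\alpha)/\alpha$. The constraint therefore holds with equality, and $\pi^+(S)=1$, so $\pi^+$ attains the maximal objective value.

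\textbf{Uniqueness.} Let $r$ be any feasible distribution with $r(S)=1$. Finite $\chi^2$ forces $r\ll\pi$, so $r$ is supported on $S\cap\operatorname{supp}\pi$. By Cauchy--Schwarz,
\begin{equation*}
1=\Bigl(\sum_{S} r\Bigr)^2=\Bigl(\sum_S \frac{r}{\sqrt{\pi}}\sqrt{\pi}\Bigr)^2\le \Bigl(\sum_S \frac{r^2}{\pi}\Bigr)\,\alpha .
\end{equation*}
This gives $\chi^2(r\,\|\,\pi)\ge 1/\alpha-1$, with equality iff $r/\pi$ is constant on $S$, that is, iff $r=\pi^+$. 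Combined with the constraint $\chi^2(r\,\|\,\pi)\le(1-\alpha)/\alpha$, equality is forced, so $r=\pi^+$.

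This argument also justifies the remark after the proposition. The radius $(1-\alpha)/\alpha$ is exactly the minimal $\chi^2$ divergence needed to reach perfect safety reward, so $\pi^+$ is the $\chi^2$-projection of $\pi$ onto the safe simplex. The only delicate step is the uniqueness argument: one must check the equality case of Cauchy--Schwarz and handle points where $\pi(\rvy)=0$. The latter is immediate by the convention that $\chi^2=\infty$ unless $r\ll\pi$. Everything else is direct computation.
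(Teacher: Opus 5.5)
Your proof is correct and follows essentially the same route as the paper: restrict to perfect-reward distributions supported on $S$, then use Cauchy--Schwarz to get $\chi^2(r\,\|\,\pi)\ge(1-\alpha)/\alpha$ with equality iff $r\propto\pi$ on $S$, and note that $\pi^+$ attains this bound. You establish feasibility of $\pi^+$ first and handle points with $\pi(\rvy)=0$ explicitly via absolute continuity, which makes your write-up slightly more careful than the paper's, but the argument is the same.
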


For any source $\pi$, the safety-filtered distribution $\pi^+$ is the most conservative modification of $\pi$ that places all mass on safe outputs, whether the source is the student, a teacher, or a steered model. Together with \Cref{lem:projection}, this justifies filtering as the canonical operation for converting any candidate source into a safe training target, and singles out $p_\text{ref}^+$ as the unique choice that additionally minimizes distributional drift from the frozen student.

\begin{assumption}[Refusal tilt]\label{assump:tilt}
For a harmful prompt $\rvx_h$, there exists $\omega(x_h)> 1$ such that the refusal instruction reweights safe outputs by $\omega(x_h)$ while leaving unsafe outputs unchanged:
\begin{equation}
p_\text{ref}(\rvy\mid I_\text{refusal},\rvx_h)\propto
\begin{cases}
\omega(\rvx_h)\cdot p_\text{ref}(\rvy\mid\rvx_h) & \text{if }\varphi(\rvx_h,\rvy)=1,\\
p_\text{ref}(\rvy\mid\rvx_h) & \text{if }\varphi(\rvx_h,\rvy)=0.
\end{cases}
\end{equation}
That is, relative probabilities \emph{within} the safe set and \emph{within} the unsafe set are preserved. $I_\text{refusal}$ only shifts odds between the two groups.
\end{assumption}


\begin{corollary}\label{cor:steering}
Let $\pi_h \coloneqq p_\text{ref}(\cdot\mid I_\text{refusal},\rvx_h)$ with acceptance rate $\alpha_{\pi_h}(\rvx_h)$. Under \Cref{assump:tilt},
\begin{equation}
\pi_h^+(\cdot\mid\rvx_h)=p_\text{ref}^+(\cdot\mid\rvx_h),
\qquad
\alpha_{\pi_h}(\rvx_h)=\frac{\omega(\rvx_h)\,\alpha_\text{ref}(\rvx_h)}{1+(\omega(\rvx_h)-1)\,\alpha_\text{ref}(\rvx_h)}\ge\alpha_\text{ref}(\rvx_h),
\end{equation}
with strict inequality when $\alpha_\text{ref}(\rvx_h)<1$. Hence the accepted target is unchanged while the acceptance rate increases (strictly so whenever $0<\alpha_\text{ref}(\rvx_h)<1$). Collecting $m$ accepted traces requires $\alpha_{\pi_h}(\rvx_h)/\alpha_\text{ref}(\rvx_h)\ge 1$ times fewer expected generations with steering, and for small $\alpha_\text{ref}(\rvx_h)$ this speedup approaches $\omega(\rvx_h)$.
\end{corollary}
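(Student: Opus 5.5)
The plan is to make the normalizing constant in \Cref{assump:tilt} explicit and then read off both claims by direct computation. Fix $\rvx_h$ and write $\omega=\omega(\rvx_h)$, $\alpha=\alpha_\text{ref}(\rvx_h)$, $p=p_\text{ref}(\cdot\mid\rvx_h)$. Summing the unnormalized tilted weights over $\mathcal{Y}$, the safe set contributes $\omega\alpha$ and the unsafe set contributes $1-\alpha$. This gives the closed form
\begin{equation*}
\pi_h(\rvy)=\frac{p(\rvy)\bigl(\omega\,\varphi(\rvx_h,\rvy)+1-\varphi(\rvx_h,\rvy)\bigr)}{Z},\qquad Z=\omega\alpha+1-\alpha=1+(\omega-1)\alpha .
\end{equation*}
Note that $Z\ge 1>0$ because $\omega>1$, so the tilt is well defined even when $\alpha=0$.

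For the acceptance rate, summing $\pi_h$ over safe $\rvy$ gives $\alpha_{\pi_h}=\omega\alpha/Z$, which is the stated formula. For the filtered conditional, assume $\alpha>0$, so that $p_\text{ref}^+$ is defined and $\alpha_{\pi_h}>0$. For safe $\rvy$,
\begin{equation*}
\pi_h^+(\rvy)=\frac{\omega p(\rvy)/Z}{\omega\alpha/Z}=\frac{p(\rvy)}{\alpha}=p_\text{ref}^+(\rvy\mid\rvx_h).
\end{equation*}
For unsafe $\rvy$, both sides vanish. So the factors $\omega$ and $Z$ cancel, and this is exactly where the assumption that relative probabilities within the safe set are preserved is used.

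For the inequality, compute
\begin{equation*}
\alpha_{\pi_h}-\alpha=\frac{\alpha\bigl(\omega-Z\bigr)}{Z}=\frac{(\omega-1)\,\alpha(1-\alpha)}{Z}.
\end{equation*}
This is nonnegative and strictly positive precisely when $0<\alpha<1$, since $\omega>1$ and $Z>0$. The statement's phrase ``strict when $\alpha<1$'' needs the implicit $\alpha>0$. At $\alpha=0$ both rates vanish and there is only equality, so I would note that boundary case explicitly.

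For the sampling-cost claim, generations are i.i.d. with acceptance probability $\alpha_\pi$. The number of draws needed to obtain $m$ accepted traces is then negative binomial with mean $m/\alpha_\pi$. The ratio of expected costs without and with steering is therefore $\alpha_{\pi_h}/\alpha=\omega/(1+(\omega-1)\alpha)\ge 1$, and it tends to $\omega$ as $\alpha\to 0$.

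None of these steps is a real obstacle; the computation is elementary. The only care needed is bookkeeping of the edge cases: $\alpha=0$, where $p_\text{ref}^+$ is undefined, and $\alpha=1$, where equality holds.
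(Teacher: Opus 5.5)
Your proposal is correct and follows the paper's proof essentially step for step. Both make the normalizer $Z=1+(\omega-1)\alpha_{\mathrm{ref}}$ explicit, sum over the safe set to get $\alpha_{\pi_h}=\omega\alpha_{\mathrm{ref}}/Z$, cancel the common factor $\omega/Z$ in the filtered conditional, and use the negative-binomial (sum-of-geometrics) mean $m/\alpha$ for the sampling cost. The one small difference is that you obtain the inequality from the explicit difference $(\omega-1)\alpha_{\mathrm{ref}}(1-\alpha_{\mathrm{ref}})/Z$, whereas the paper gets it implicitly from the mean ratio $Z/\omega=\alpha_{\mathrm{ref}}+(1-\alpha_{\mathrm{ref}})/\omega\le 1$. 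Your $\alpha_{\mathrm{ref}}=0$ caveat is also right, though the paper's appendix assumes $\alpha_{\mathrm{ref}}>0$ throughout, so that case is excluded there.
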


Since $\pi_h^+=p_{\mathrm{ref}}^+$ exactly, refusal steering achieves the same KL minimum as benign self-generation ($\pi=p_{\mathrm{ref}}$): $\KL(\pi_h^+\parallel p_{\mathrm{ref}})=-\log\alpha_{\mathrm{ref}}(\rvx_h)$.
Moreover, prompts with lower acceptance rates receive a \emph{larger} multiplicative boost: for any $0<a\le b\le 1$ and common $\omega>1$,
$f_\omega(a)/f_\omega(b) > a/b$
where $f_\omega(a)\coloneqq \omega a/\left(1+(\omega-1)a\right)$.
Thus steering can reduce the \emph{multiplicative} acceptance-rate imbalance across harmful prompts, especially when $\omega$ is large enough to raise all rates above a common threshold.

\section{Experiment}
\label{sec:experiment}

\subsection{Experimental Setup}
\paragraph{Implementation details.} 
Using the same set of prompts from the SafeChain dataset, we apply \textsc{ThinkSafe} to distilled models from the \href{https://huggingface.co/collections/deepseek-ai/deepseek-r1}{DeepSeek-R1-Distill series} (1.5B to 8B) \citep{guo_deepseek-r1_2025} and the \href{https://huggingface.co/collections/Qwen/qwen3}{Qwen3 family} (0.6B to 8B) \citep{yang2025qwen3technicalreport}. Details regarding our generated dataset are provided in~\Cref{sec:appendix_stats}. Based on the findings that LoRA \citep{hu2022lora} effectively preserves a model's intrinsic capabilities after fine-tuning \citep{Biderman2024LoRALL, Xue2025LoRAIA}, we adopt LoRA with rank $r=32$, scaling factor $\alpha=16$, and dropout rate~\citep{srivastava2014dropout} of $0.05$ applied to the query and value projections for all experimental configurations. For optimization, we use AdamW~\citep{adamw} with a base learning rate of $1 \times 10^{-5}$ and a cosine scheduler with a linear warmup over the first $10\%$ of training steps. While we strictly adhere to the original literature settings for the baselines, \textsc{ThinkSafe} is trained for 3 epochs, consistent with the SafeChain configuration. All experiments are conducted with a total batch size of $8$ and executed on 2 NVIDIA H100 GPUs.

\vspace{-0.1in}
\paragraph{Datasets.}
We use four challenging benchmarks to assess reasoning proficiency: GSM8K~\citep{gsm8k}, MATH500~\citep{math500}, AIME24~\citep{aime24}, and GPQA~\citep{gpqa}. We sample 8 responses per prompt and report the average pass@1 using \href{https://github.com/NovaSky-AI/SkyThought}{SkyThought}~\citep{skythought}. For safety, we evaluate on StrongReject~\citep{strongreject}, HarmBench~\citep{harmbench}, WildJailbreak~\citep{jiang_wildteaming_2024}, and XSTest~\citep{xstest}. For the first three, we sample a single response per prompt, evaluate harmfulness with \href{https://huggingface.co/meta-llama/Llama-Guard-3-8B}{Llama-Guard-3}, and report the ratio of harmful responses. For XSTest, we evaluate on the safe prompt subset to monitor over-refusal using \href{https://huggingface.co/allenai/wildguard}{WildGuard}~\citep{wildguard}. More details are in~\Cref{app:exp-detail}.

\begin{table*}[t]
    \caption{Results on \textbf{Qwen3} models. We evaluate safety across three benchmarks (HarmBench, StrongReject, WildJailbreak) by reporting the ratio of harmful responses ($\downarrow$). Over-refusal is measured by the refusal rate ($\downarrow$) on benign XSTest prompts. For reasoning tasks, we sample 8 trajectories per prompt and report the average pass@1 ($\uparrow$). Best results are \textbf{bolded}; second best are \underline{underlined}.}
    \vspace{-0.05in}
    \label{tab:qwen_result}
    \centering
    \small
    \resizebox{\textwidth}{!}{
    \begin{tabular}{clccc @{\hskip 0.2pt}c>{\columncolor{mygray}}ccccc >{\columncolor{mygray}}c}
        \toprule

            \multicolumn{1}{c}{} & \multicolumn{1}{c}{} & \multicolumn{5}{c}{\textbf{Safety ($\downarrow$)}} & \multicolumn{5}{c}{\multirow{2}{*}[-3pt]{\textbf{Reasoning (Avg pass@1, $\uparrow$)}}} \\

            \cmidrule[0.4pt](r){3-7}
            \multicolumn{1}{c}{} & \multicolumn{1}{c}{} & \multicolumn{3}{c}{\textbf{Harmfulness}} & \multicolumn{1}{c}{\textbf{\shortstack{Over-refusal}}} & \multicolumn{1}{c}{\multirow{2}{*}[-8.5pt]{\shortstack{Safety\\Average}}} & \multicolumn{5}{c}{} \\

            \cmidrule[0.4pt](r){3-5} \cmidrule[0.4pt](lr){6-6} \cmidrule[0.4pt](lr){8-12}
            \multicolumn{1}{c}{\raisebox{4.5pt}{\textbf{Size}}} & \multicolumn{1}{c}{\raisebox{4.5pt}{\textbf{Method}}} & \shortstack{Harm\\Bench} & \shortstack{Strong\\Reject} & \shortstack{Wild\\Jailbreak} & \multicolumn{1}{c}{\raisebox{4.5pt}{XSTest}} & \multicolumn{1}{c}{} & \shortstack{AIME\\2024} & \raisebox{4.5pt}{GSM8k} & \shortstack{MATH\\500} & \raisebox{4.5pt}{GPQA} & \multicolumn{1}{c}{{\shortstack{Reasoning\\Average}}} \\
            
            \midrule

            \multirow{7}{*}[-3pt]{\textbf{0.6B}} & Initial & 68.44 & 66.45 & 52.80 & \phantom{0}5.20 & 48.22 & \textbf{10.42} & \textbf{72.51} & \underline{71.73} & \underline{25.13} & \textbf{44.95} \\
            
            \noalign{\vspace{1pt}}
            \cdashline{2-12}
            \noalign{\vspace{1.5pt}}
            & DirectRefusal & \underline{43.85} & \textbf{11.82} & \textbf{36.30} & 83.60 & 43.89 & \phantom{0}5.83 & 64.30 & 67.53 & 24.81 & 40.62 \\
            
            & SafeChain & 58.64 & 72.84 & 49.60 & \textbf{\phantom{0}0.00} & 45.20 & \phantom{0}4.58 & 68.68 & 62.42 & 23.74 & 39.86 \\
            
            & STAR-1 & 56.64 & 38.02 & 50.60 & 22.40 & \underline{41.92} & \phantom{0}6.25 & 68.15 & 68.17 & 24.18 & 41.69 \\
            
            & SafePath & 67.61 & 60.06 & 52.80 & \phantom{0}\underline{4.40} & 46.22 & \phantom{0}7.92 & 71.26 & \textbf{71.77} & \textbf{26.07} & \underline{44.26} \\
            
            & SafeKey & 60.96 & 48.88 & 52.75 & 18.40 & 45.25 & \phantom{0}5.42 & 71.58 & 66.17 & 24.94 & 42.03 \\

            \rowpink
            \cellcolor{white}
            & \textbf{\textsc{ThinkSafe}} & \textbf{40.37} & \underline{33.87} & \underline{37.95} & \phantom{0}6.40 & \textbf{29.65} & \phantom{0}\underline{9.58} & \underline{72.36} & 70.65 & 23.30 & 43.97 \\

            \midrule

            \multirow{7}{*}[-3pt]{\textbf{1.7B}} & Initial & 52.66 & 36.10 & 51.10 & \phantom{0}\textbf{1.20} & 35.27 & \underline{44.58} & 84.31 & \underline{88.85} & \underline{41.73} & \underline{64.87} \\

            \noalign{\vspace{1pt}}
            \cdashline{2-12}
            \noalign{\vspace{1.5pt}}
            & DirectRefusal & 38.54 & \textbf{\phantom{0}5.75} & \underline{35.75} &  61.60 & 35.41 & 43.75 & 82.78 & 88.10 & 41.29 & 63.98 \\
            
            & SafeChain & 47.34 & 57.51 & 43.85 & \phantom{0}1.60 & 37.58 & 34.58 & \textbf{85.29} & 85.72 & 38.13 & 60.93 \\
            
            & STAR-1 & \underline{37.38} & \phantom{0}\underline{7.67} & 46.60 & 10.80 & \underline{25.61} & \textbf{46.25} & \underline{84.38} & 88.30 & 41.16 & \textbf{65.02} \\
            
            & SafePath & 54.15 & 36.42 & 49.30 & \phantom{0}\underline{1.20} & 35.27 & 43.33 & 84.33 & 88.32 & \textbf{42.42} & 64.60 \\
            
            & SafeKey & 46.84 & 18.21 & 48.85 & \phantom{0}8.80 & 30.68 & 38.33 & 84.31 & 88.12 & 40.03 & 62.70 \\
            
            \rowpink
            \cellcolor{white}
            & \textbf{\textsc{ThinkSafe}} & \textbf{28.74} & \phantom{0}9.58 & \textbf{29.20} & \phantom{0}2.00 & \textbf{17.38} & 44.17 & 83.80 & \textbf{89.05} & 40.53 & 64.39 \\

            \midrule

            \multirow{7}{*}[-3pt]{\textbf{4B}} & Initial & 38.21 & \phantom{0}8.31 & 43.00 & \phantom{0}\textbf{0.80} & 22.58 & 67.50 & 84.69 & \underline{93.43} & 52.27 & 74.47 \\

            \noalign{\vspace{1pt}}
            \cdashline{2-12}
            \noalign{\vspace{1.5pt}}
            & DirectRefusal & \underline{33.06} & \phantom{0}\underline{3.19} & 36.20 &  32.00 & 29.80 & 68.33 & 82.58 & 93.20 & 53.03 & 74.29 \\
            
            & SafeChain & 43.69 & 41.21 & 39.65 & \phantom{0}\underline{2.00} &  31.64 & 62.08 & 89.59 & 93.03 & 51.01 & 73.93 \\
            
            & STAR-1 & 33.72 & \phantom{0}5.75 & 35.15 & \phantom{0}6.80 & 20.36 & 62.50 & \underline{90.97} & 93.05 & 51.96 & 74.62 \\
            
            & SafePath & 37.71 & \phantom{0}7.35 & 42.45 & \phantom{0}1.60 & 22.28 & \underline{72.08} & 84.45 & 93.33 & \underline{53.54} & 75.85 \\
            
            & SafeKey & 32.39 & \phantom{0}3.19 & \underline{32.95} & \phantom{0}0.80 & \underline{17.33} & 67.08 & \textbf{91.79} & 92.87 & 51.83 & \underline{75.89} \\

            \rowpink
            \cellcolor{white}
            & \textbf{\textsc{ThinkSafe}} & \phantom{0}\textbf{9.63} & \textbf{\phantom{0}0.32} & \textbf{\phantom{0}7.45} & \phantom{0}2.80 & \phantom{0}\textbf{5.05} & \textbf{73.33} & 88.06 & \textbf{93.53} & \textbf{53.79} & \textbf{77.18} \\
            

            \midrule

            \multirow{7}{*}[-3pt]{\textbf{8B}} & Initial & 35.05 & \phantom{0}4.47 & 38.35 & \phantom{0}\textbf{0.40} & 19.57 & 74.17 & 85.28 & \textbf{94.18} & 50.69 & 76.08 \\

            \noalign{\vspace{1pt}}
            \cdashline{2-12}
            \noalign{\vspace{1.5pt}}
            & DirectRefusal & \underline{24.42} & \phantom{0}1.92 & \underline{28.05} & 37.60 & 23.00 & \underline{74.58} & 84.31 & 93.63 & 59.41 & 77.98 \\

            & SafeChain & 41.20 & 38.95 & 36.42 & \phantom{0}1.20 & 29.44 & 70.00 & \textbf{92.98} & 93.53 & 58.21 & 78.68 \\
            & STAR-1 & 24.42 & \phantom{0}\underline{1.28} & 29.25 & \phantom{0}6.80 & \underline{15.44} & 72.50 & 90.29 & 93.73 & 57.83 & 78.59 \\
            & SafePath & 35.22 & \phantom{0}6.71 & 39.45 & \phantom{0}\underline{1.20} & 20.64 & \textbf{74.58} & 84.89 & \underline{93.85} & \textbf{61.24} & \underline{78.64} \\
            & SafeKey & 26.91 & \phantom{0}4.79 & 28.80 & \phantom{0}8.80 & 17.33 & 70.00 & \underline{92.44} & 93.30 & 59.91 & \textbf{78.91} \\

            \rowpink
            \cellcolor{white}
            & \textbf{\textsc{ThinkSafe}} & \phantom{0}\textbf{9.14} & \textbf{\phantom{0}0.32} & \textbf{\phantom{0}7.35} & \phantom{0}1.20 & \phantom{0}\textbf{4.50} & 72.92 & 88.00 & 93.10 & \underline{59.67} & 78.50 \\

        \bottomrule
    \end{tabular}
    }
\vspace{-0.1in}
\end{table*}
\vspace{-0.1in}

\paragraph{Baselines.}
We compare \textsc{ThinkSafe} against the following competitive fine-tuning baselines, following their original training settings. Detailed configurations are provided in \Cref{sec:appendix_baseline}.
\begin{itemize}[leftmargin=*, itemsep=0.1pt, topsep=0.9pt, parsep=5pt]
    \item \textbf{DirectRefusal} \citep{huang2025safetytaxsafetyalignment} adds a fixed thinking trace into existing refusal responses to harmful prompts, enforcing immediate refusals without reasoning.
    
    \item \textbf{SafeChain} \citep{jiang-etal-2025-safechain} distills both the intermediate reasoning chain and the final response from a larger teacher model, \href{https://huggingface.co/deepseek-ai/DeepSeek-R1}{DeepSeek-R1} (685B).
    
    \item \textbf{STAR-1}~\citep{wang2025star1saferalignmentreasoning} leverages a larger teacher to generate policy-guided reasoning traces, using an LLM-as-a-judge to select the top 1,000 examples for training.
    
    \item \textbf{SafePath} \citep{Jeung2025SAFEPATHPH} injects a safety cue (``Let's think about safety first'') at the beginning of reasoning, leaving the remainder of the generation unsupervised.
    
    \item \textbf{SafeKey} \citep{zhou-etal-2025-safekey} uses the same dataset as STAR-1 but employs an auxiliary dual-path safety head to strengthen safety signals in internal representations.
\end{itemize}

\begin{table*}[t]
    \caption{Results on \textbf{DeepSeek-R1-Distill} models. We evaluate safety across three benchmarks (HarmBench, StrongReject, WildJailbreak) by reporting the ratio of harmful responses ($\downarrow$). Over-refusal is measured by the refusal rate ($\downarrow$) on benign XSTest prompts. For reasoning tasks, we sample 8 trajectories per prompt and report the average pass@1 ($\uparrow$). Best results are \textbf{bolded}; second best are \underline{underlined}.}
    \vspace{-0.05in}
    \label{tab:deepseek_result}
    \centering
    \small
    \resizebox{\textwidth}{!}{
    \begin{tabular}{clccc @{\hskip 0.2pt}c>{\columncolor{mygray}}ccccc >{\columncolor{mygray}}c}
        \toprule

            \multicolumn{1}{c}{} & \multicolumn{1}{c}{} & \multicolumn{5}{c}{\textbf{Safety ($\downarrow$)}} & \multicolumn{5}{c}{\multirow{2}{*}[-3pt]{\textbf{Reasoning (Avg pass@1, $\uparrow$)}}} \\

            \cmidrule[0.4pt](r){3-7}
            \multicolumn{1}{c}{} & \multicolumn{1}{c}{} & \multicolumn{3}{c}{\textbf{Harmfulness}} & \multicolumn{1}{c}{\textbf{\shortstack{Over-refusal}}} & \multicolumn{1}{c}{\multirow{2}{*}[-8.5pt]{\shortstack{Safety\\Average}}} & \multicolumn{5}{c}{} \\

            \cmidrule[0.4pt](r){3-5} \cmidrule[0.4pt](lr){6-6} \cmidrule[0.4pt](lr){8-12}
            \multicolumn{1}{c}{\raisebox{4.5pt}{\textbf{Size}}} & \multicolumn{1}{c}{\raisebox{4.5pt}{\textbf{Method}}} & \shortstack{Harm\\Bench} & \shortstack{Strong\\Reject} & \shortstack{Wild\\Jailbreak} & \multicolumn{1}{c}{\raisebox{4.5pt}{XSTest}} & \multicolumn{1}{c}{} & \shortstack{AIME\\2024} & \raisebox{4.5pt}{GSM8k} & \shortstack{MATH\\500} & \raisebox{4.5pt}{GPQA} & \multicolumn{1}{c}{{\shortstack{Reasoning\\Average}}} \\
            
            \midrule

            \multirow{7}{*}[-3pt]{\textbf{1.5B}} & Initial & 67.28 & 82.11 & 51.55 & \phantom{0}\textbf{0.00} & 50.23 & 21.25 & \textbf{82.42} & 79.45 & 31.94 & 53.77 \\
            
            \noalign{\vspace{1pt}}
            \cdashline{2-12}
            \noalign{\vspace{1.5pt}}
            & DirectRefusal & 66.11 & 82.75 & 50.45 & \phantom{0}8.40 & 51.93 & 19.17 & 81.06 & 78.55 & \textbf{32.70} & 52.87 \\

            & SafeChain & 59.30 & 76.68 & \underline{46.95} & \phantom{0}0.40 & 45.73 & 24.17 & 80.47 & 81.25 & 28.28 & 53.54 \\
            
            & STAR-1 & 62.79 & 77.00 & 49.65 & \phantom{0}1.20 & 47.66 & 17.08 & 81.38 & 79.07 & 31.25 & 52.20 \\
            
            & SafePath & 65.28 & 82.43 & 51.80 & \phantom{0}\underline{0.40} & 49.98 & \underline{24.17} & 82.37 & \underline{79.57} & \underline{32.51} & \underline{54.66} \\
            
            & SafeKey & \underline{58.80} & \textbf{73.16} & 47.65 & \phantom{0}3.60 & \underline{45.80} & 19.58 & 81.25 & 78.02 & 28.72 & 51.89 \\

            \rowblue
            \cellcolor{white}
            & \textbf{\textsc{ThinkSafe}} & \textbf{52.99} & \underline{74.12} & \textbf{40.50} & \phantom{0}1.20 & \textbf{42.20} & \textbf{32.92} & \underline{82.58} & \textbf{82.50} & 31.19 & \textbf{57.30} \\

            \midrule

            \multirow{7}{*}[-3pt]{\textbf{7B}} & Initial & 56.98 & 63.58 & 53.15 & \phantom{0}1.20 & 43.73 & 49.58 & 90.32 & 90.18 & \underline{46.65} & 69.18 \\

            \noalign{\vspace{1pt}}
            \cdashline{2-12}
            \noalign{\vspace{1.5pt}}
            & DirectRefusal & 52.33 & \textbf{33.55} & 50.20 & 43.60 & 44.92 & 47.50 & 88.27 & 89.82 & 44.95 & 67.64 \\

            & SafeChain & 51.00 & 54.63 & 45.85 & \phantom{0}0.40 & 37.97 & 49.17 & 89.75 & 91.50 & 46.78 & 69.30 \\
            
            & STAR-1 & 52.99 & 47.92 & 48.75 & \phantom{0}2.40 & 38.02 & 45.83 & \underline{90.32} & 90.58 & 46.02 & 68.19 \\
            
            & SafePath & 55.15 & 64.86 & 52.65 & \phantom{0}\textbf{0.00} & 43.16 & \textbf{52.08} & 89.71 & \underline{90.62} & 46.15 & \textbf{69.64} \\
            
            & SafeKey & \underline{45.35} & \underline{33.87} & \underline{45.75} & \phantom{0}7.20 & \underline{33.04} & 43.75 & \textbf{90.58} & 89.90 & \textbf{47.29} & 67.88 \\
            
            \rowblue
            \cellcolor{white}
            & \textbf{\textsc{ThinkSafe}} & \textbf{40.20} & 41.85 & \textbf{35.40} & \phantom{0}\underline{0.40} & \textbf{29.46} & \underline{51.25} & 90.10 & \textbf{91.90} & 45.20 & \underline{69.61} \\

            \midrule

            \multirow{7}{*}[-3pt]{\textbf{8B}} & Initial & 52.33 & 53.99 & 49.70 & \phantom{0}\textbf{0.40} & 39.10 & \underline{47.50} & \textbf{87.74} & 87.38 & \textbf{48.11} & \textbf{67.68} \\
            
            \noalign{\vspace{1pt}}
            \cdashline{2-12}
            \noalign{\vspace{1.5pt}}
            & DirectRefusal & 32.39 & \phantom{0}\textbf{0.64} & 32.60 & 50.00 & 28.91 & 40.00 & 83.26 & 85.00 & 43.50 & 62.94 \\

            & SafeChain & 44.52 & 46.33 & 42.45 & \phantom{0}1.60 & 33.72 & 41.67 & 86.06 & 86.50 & 42.05 & 64.07 \\
            
            & STAR-1 & \textbf{21.26} & \phantom{0}\underline{3.51} & \textbf{17.60} & 12.00 & \textbf{13.59} & 40.42 & 87.28 & 86.65 & 43.69 & 64.51 \\
            
            & SafePath & 47.51 & 51.44 & 50.20 & \phantom{0}\underline{0.40} & 37.39 & 43.33 & 87.45 & \underline{87.43} & \underline{47.41} & 66.41 \\
            
            & SafeKey & 32.72 & {11.82} & 30.85 & \phantom{0}8.00 & 20.85 & 35.83 & 87.41 & 85.80 & 42.49 & 62.88 \\
            
            \rowblue
            \cellcolor{white}
            & \textbf{\textsc{ThinkSafe}} & \underline{27.08} & {26.52} & \underline{21.15} & \phantom{0}1.60 & \underline{19.09} & \textbf{48.75} & \underline{87.55} & \textbf{87.70} & 46.28 & \underline{67.47} \\
            
        \bottomrule
    \end{tabular}
    }
\vspace{-0.1in}
\end{table*}
\subsection{Main Results}

\paragraph{Superiority of \textsc{ThinkSafe}.} As summarized in \Cref{tab:qwen_result} and \Cref{tab:deepseek_result}, \textsc{ThinkSafe} consistently achieves the most favorable safety-reasoning trade-off across both the Qwen3 and DeepSeek-R1-Distill families. On Qwen3-4B, \textsc{ThinkSafe} drastically reduces harmfulness on HarmBench from 38.21 to \textbf{9.63} while simultaneously boosting average reasoning accuracy from 74.47 to \textbf{77.18}. On DeepSeek-R1-Distill-1.5B, it improves reasoning from 53.77 to \textbf{57.30} while reducing average harmfulness from 50.23 to \textbf{42.20}. Even on larger models like Qwen3-8B and DeepSeek-R1-Distill-8B, \textsc{ThinkSafe} cuts average harmfulness by more than half (\eg, 19.57 $\to$ \textbf{4.50} on Qwen3-8B) without reasoning penalties. This validates that the KL-optimal target $p_\text{ref}^+$ established in \Cref{sec:theory} is empirically realizable via self-generation with refusal steering.

\vspace{-0.1in}
\paragraph{Failure of teacher-distillation methods.} Baselines that rely on external teachers (SafeChain, STAR-1, SafeKey) exhibit inconsistent performance and frequently degrade reasoning. This degradation is most severe in smaller or distilled models: on Qwen3-0.6B, SafeChain drops average reasoning from 44.95 to 39.86, and on Qwen3-1.7B from 64.87 to 60.93. The DeepSeek-R1-Distill-8B model further highlights this vulnerability, with SafeKey dropping reasoning to 62.88, SafeChain to 64.07, and STAR-1 to 64.51 from an initial 67.68. These patterns are consistent with the irreducible excess KL $(\mathrm{KL}(p_T^+ \| p_\text{ref}^+) > 0)$ established in \Cref{sec:theory}: no amount of filtering can eliminate the student-teacher gap, so distilled safety necessarily comes at the cost of native reasoning.

\vspace{-0.1in}
\paragraph{Limitations of superficial alignment.} Approaches that bypass or loosely constrain reasoning also fail. DirectRefusal suffers severe over-refusal and reasoning penalties: on Qwen3-0.6B, it degrades average reasoning to 40.62 while exhibiting an 83.6\% refusal rate on benign XSTest prompts. By short-circuiting reasoning, it prevents the model from leveraging its latent capacity to think through safety constraints. SafePath is less destructive but fails to achieve robust safety: on Qwen3-1.7B, it achieves an average harmfulness of 46.62 compared to \textsc{ThinkSafe}'s 22.51. This suggests that surface-level cues are insufficient to override compliance priors. Explicit safety reasoning traces are necessary to robustly steer generation.

\subsection{Comparison with Online Learning Algorithms}
\paragraph{GRPO baseline and $\textsc{ThinkSafe}+\KL$.} We compare \textsc{ThinkSafe} against an online RL baseline trained via GRPO \citep{guo_deepseek-r1_2025}. The student $p_\theta$, initialized from Qwen3-0.6B, generates rollouts $\rvy\sim p_\theta(\cdot\mid \rvx)$ for prompts $\rvx\in\calD$ and is optimized using a combined objective: a safety reward $r_\text{safety}(\rvx,\rvy) \in[0,1]$ derived from the safety guard $\varphi$ and a format reward $r_\text{format}(\rvx,\rvy)\in\{0,1\}$. Further details on the GRPO objective, reward design, and hyperparameters appear in \Cref{sec:appendix_grpo}.

For parity with GRPO's backward KL regularization, we introduce $\textsc{ThinkSafe}+\KL$. Since standard cross-entropy already minimizes forward KL on accepted traces, we additionally replace the loss on benign responses with a token-wise, full-vocabulary forward KL between the reference and student models. This allocates KL regularization specifically to preserving the native distribution on safe queries, providing a closer structural analogue to GRPO within our self-generation framework.

\vspace{-0.1in}
\paragraph{On-Policy Distillation baseline.}
We further compare against an on-policy distillation (OPD) baseline. The student $p_\theta$ is initialized from Qwen3-0.6B, and we use Qwen3-8B as the teacher $p_{T}$, whose safety performance exceeds that of Qwen3-0.6B+\textsc{ThinkSafe} (see \Cref{tab:qwen_result}), making it a sufficiently strong supervisor for constructing a competitive baseline. For each prompt $\rvx \in \mathcal{D}$, the student samples an on-policy rollout $\rvy \sim p_\theta(\cdot \mid \rvx)$, and the teacher provides token-wise supervision by computing a full-vocabulary reverse KL along the generated trajectory. Implementation details and hyperparameters are provided in \Cref{sec:appendix_grpo}.


\begin{figure}[t]
    \centering
    \begin{minipage}{0.49\linewidth}
        \centering
        \includegraphics[width=1.0\linewidth]{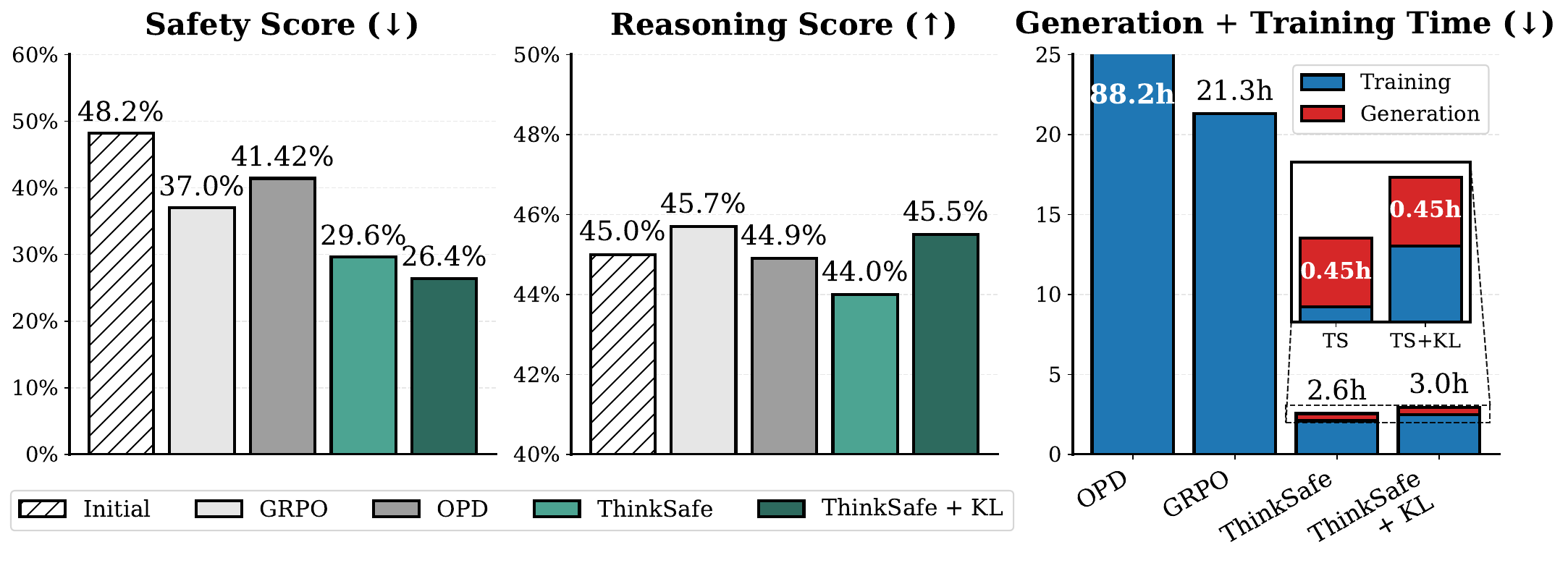}
        \vspace{-0.15in}
        \caption{Comparison against online methods.}
        \label{fig:RL}
    \end{minipage}
    \hfill
    \begin{minipage}{0.49\linewidth}
        \centering
        \includegraphics[width=1.0\linewidth]{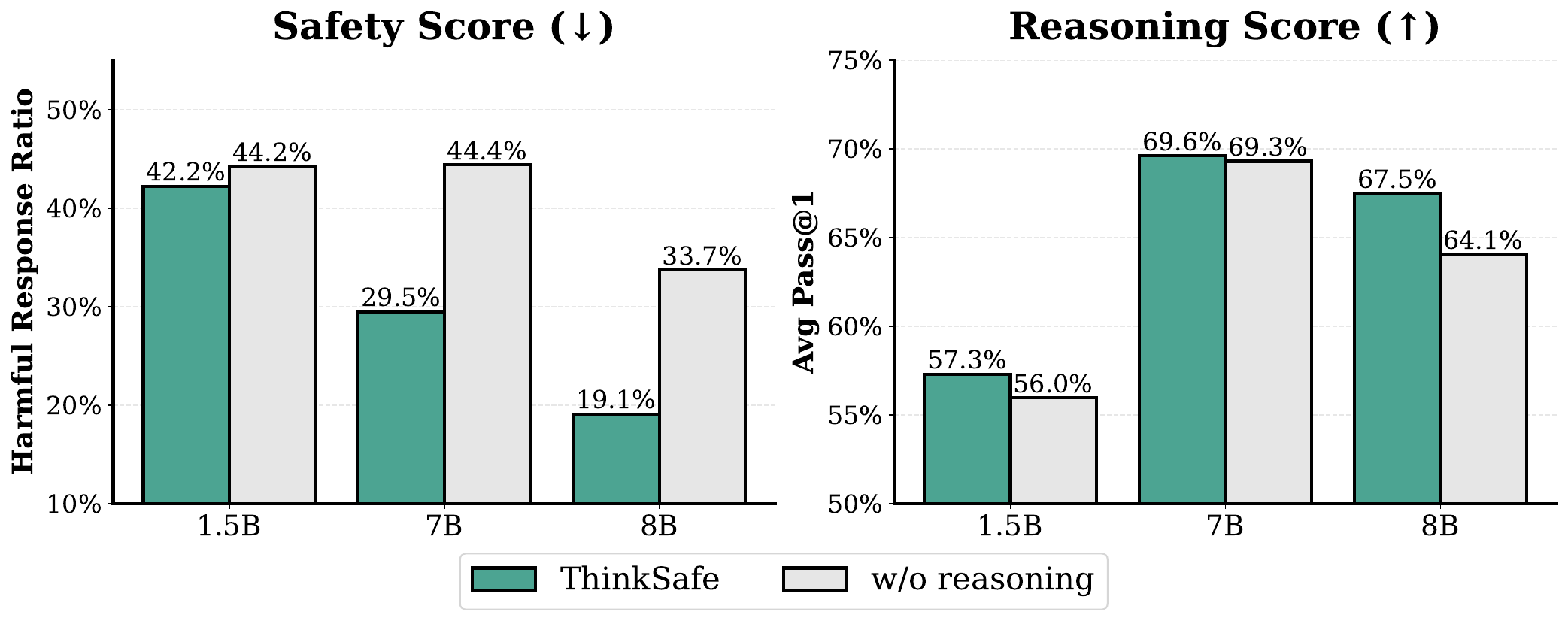}
        \vspace{-0.2in}
        \caption{Ablation of safety reasoning in R1 models.}
        \label{fig:think-ablation-r1}
    \end{minipage}
    \vspace{-0.1in}
\end{figure}
\vspace{-0.1in}
\paragraph{Results.} As shown in~\Cref{fig:RL}, \textsc{ThinkSafe} delivers a superior balance of safety and efficiency compared to online learning baselines. GRPO achieves a slight reasoning improvement but incurs a prohibitive computational cost, requiring over 21 hours of training, approximately \emph{8 times slower than our method}. OPD incurs an even larger cost, requiring over 88 hours due to repeated forward passes through a substantially larger model. Even when our reported time includes data generation, the additional cost remains marginal, so \textsc{ThinkSafe} retains a substantial efficiency advantage. \textsc{ThinkSafe} also significantly outperforms both GRPO and OPD in safety, reducing safety score to 29.6\% compared to 37.0\% and 41.42\%, respectively, with only a negligible drop in reasoning. Adding $\textsc{ThinkSafe}+\KL$ closes this gap: it further suppresses harmfulness to 26.4\% while recovering reasoning to 45.5\%, matching GRPO at a fraction of the training cost. These results are consistent with \Cref{sec:theory}'s analysis.

\subsection{Ablation Studies}

\paragraph{Necessity of safety reasoning.} Prior work such as SafeChain observed that suppressing reasoning at inference can enhance safety. To test whether this holds at training time, we construct a ``w/o reasoning'' variant where reasoning traces are stripped from refusal responses $\rvy_h$ while benign responses $\rvy_b$ retain their full CoT, and train on the DeepSeek-R1-Distill family (Qwen3 results in \Cref{fig:think-ablation-qwen} of \Cref{sec:appendix_exp}). As shown in \Cref{fig:think-ablation-r1}, stripping safety reasoning sharply increases harmful responses (\eg, 7B: 29.5 $\to$ 44.4, 8B: 19.1 $\to$ 33.7) and degrades reasoning itself: DeepSeek-R1-Distill-8B drops from 67.5 to 64.1 average pass@1. We attribute this to inconsistent optimization: forcing the model to switch between ``thinking'' (benign) and ``not thinking'' (safety) destabilizes its intrinsic chain-of-thought patterns.

\begin{figure}[t]
    \centering
    \begin{minipage}[t]{0.49\linewidth}
        \centering
        \includegraphics[width=1.0\linewidth]{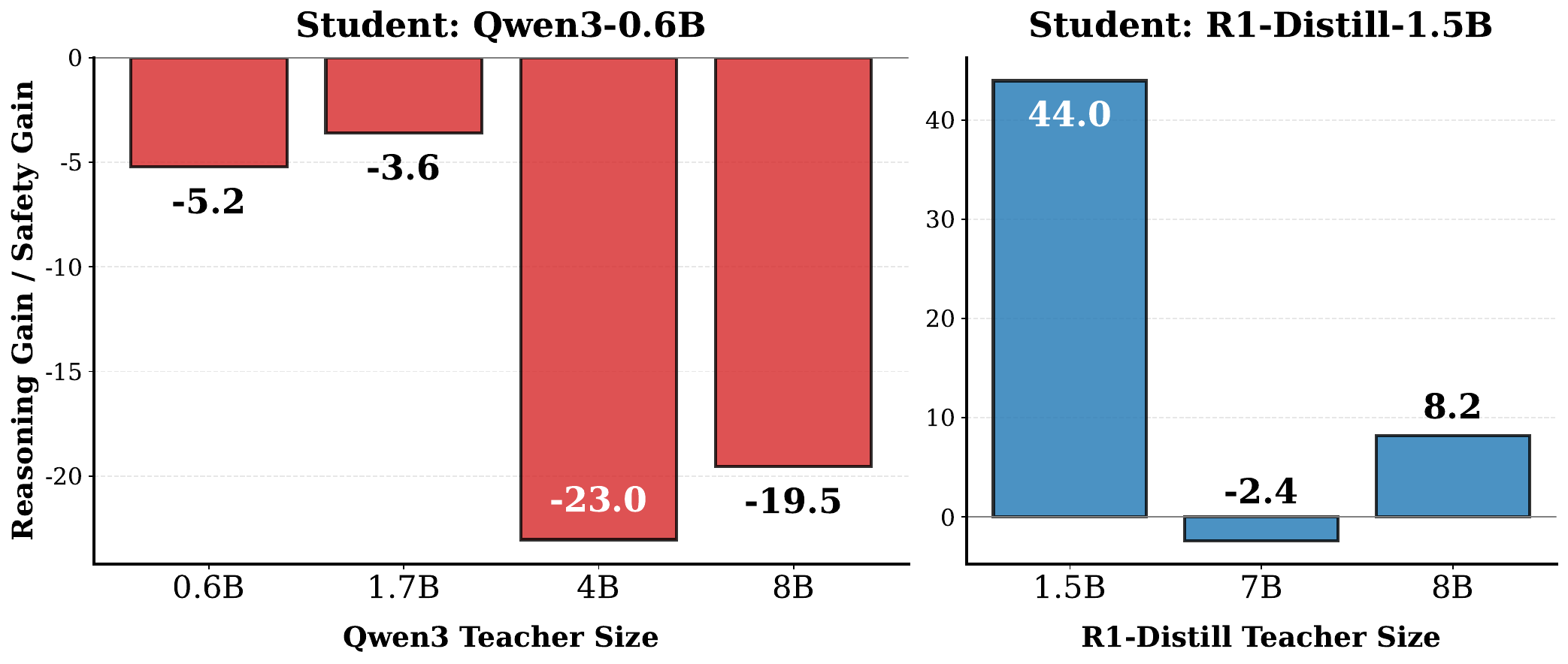}
        \vspace{-0.1in}
        \caption{Ratio of reasoning gain to safety gain for student models trained on data generated by teachers from the same model family.}
        \label{fig:teacher-family}
    \end{minipage}
    \hfill
    \begin{minipage}[t]{0.49\linewidth}
        \centering
        \includegraphics[width=1.0\linewidth]{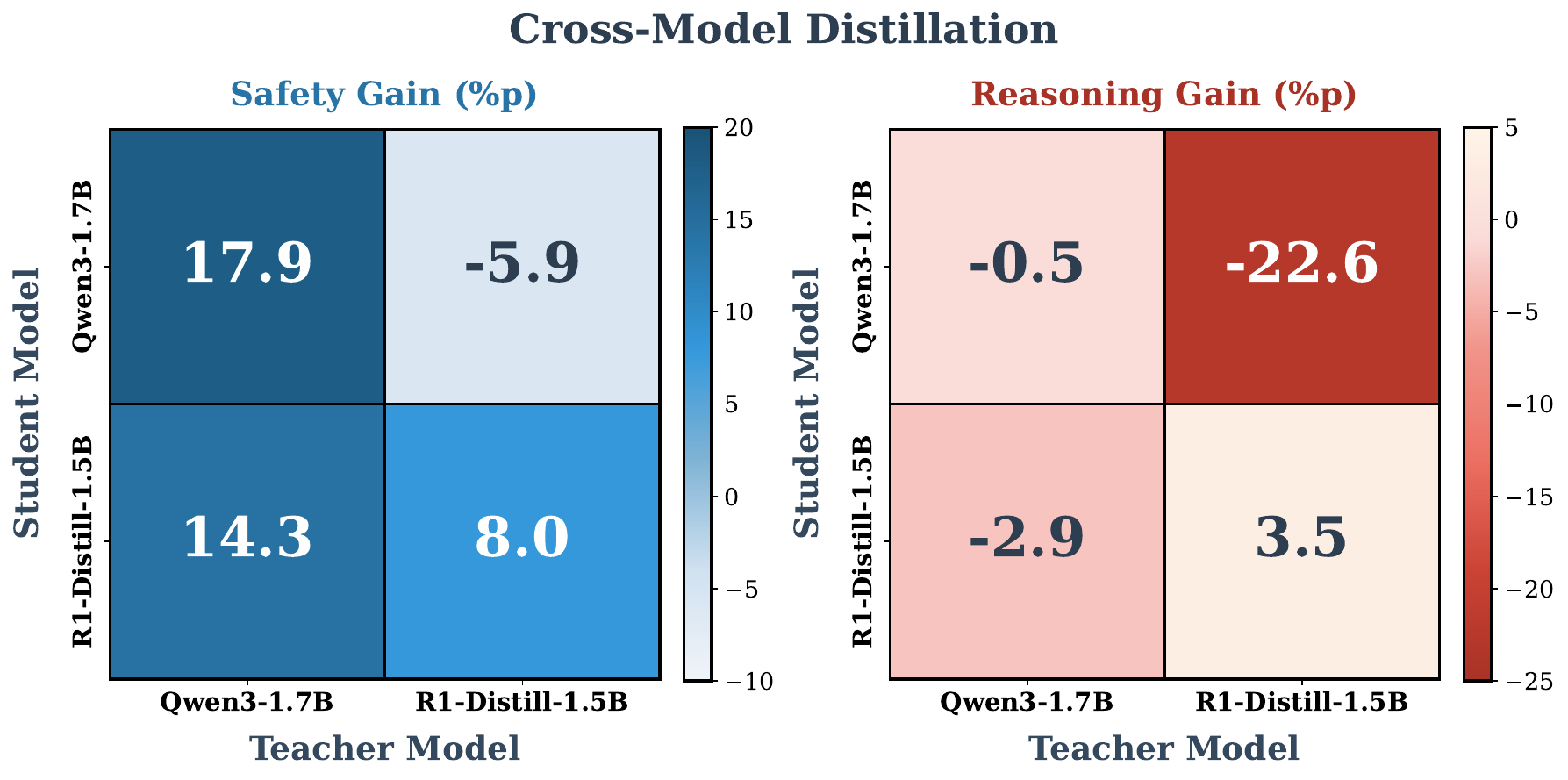}
        \vspace{-0.1in}
        \caption{\small Safety and reasoning performance gain using a different family of teacher model with similar size.}
        \label{fig:exchange}
    \end{minipage}
    \vspace{-0.1in}
\end{figure}

\vspace{-0.1in}
\paragraph{Refusal steering with external teachers.}
To test whether the teacher penalty in \Cref{sec:theory} depends on capacity or purely on distributional mismatch, we compare self-generation to steering-based distillation from teachers within the same family. Using Qwen3-0.6B and DeepSeek-R1-Distill-1.5B as students, we generate safety data from larger teachers (Qwen3-1.7B/4B/8B and DeepSeek-R1-Distill-7B/8B). As shown in~\Cref{fig:teacher-family}, larger teachers improve safety but consistently degrade reasoning. On Qwen3-0.6B, external teachers cause significant reasoning loss, while self-generation exhibits the least degradation. On DeepSeek-R1-Distill-1.5B, the 8B teacher yields a marginal positive reasoning gain but is significantly outperformed by self-generation.

To isolate distributional discrepancy from capacity, we conduct a cross-model exchange using teachers of similar size but different architectures. We generate safety data from Qwen3-1.7B and DeepSeek-R1-Distill-1.5B via refusal steering and use each dataset to train the other student. As shown in~\Cref{fig:exchange}, cross-model training occasionally improves safety (\eg, DeepSeek-R1-Distill gains 14.3\% in safety when trained on Qwen3-1.7B data) but consistently degrades reasoning (Qwen3-1.7B suffers a 22.6\% reasoning drop on DeepSeek-R1-Distill data). This directly validates the prediction of \Cref{sec:theory} that $\mathrm{KL}(p_T^+ \| p_\text{ref}^+) > 0$ whenever $p_T \neq p_\text{ref}$, regardless of capacity. The teacher penalty is a property of distributional mismatch, not model size.


\begin{figure}[t]
    \centering
    \begin{minipage}[t]{0.46\linewidth}
        \centering
        \includegraphics[width=1.0\linewidth]{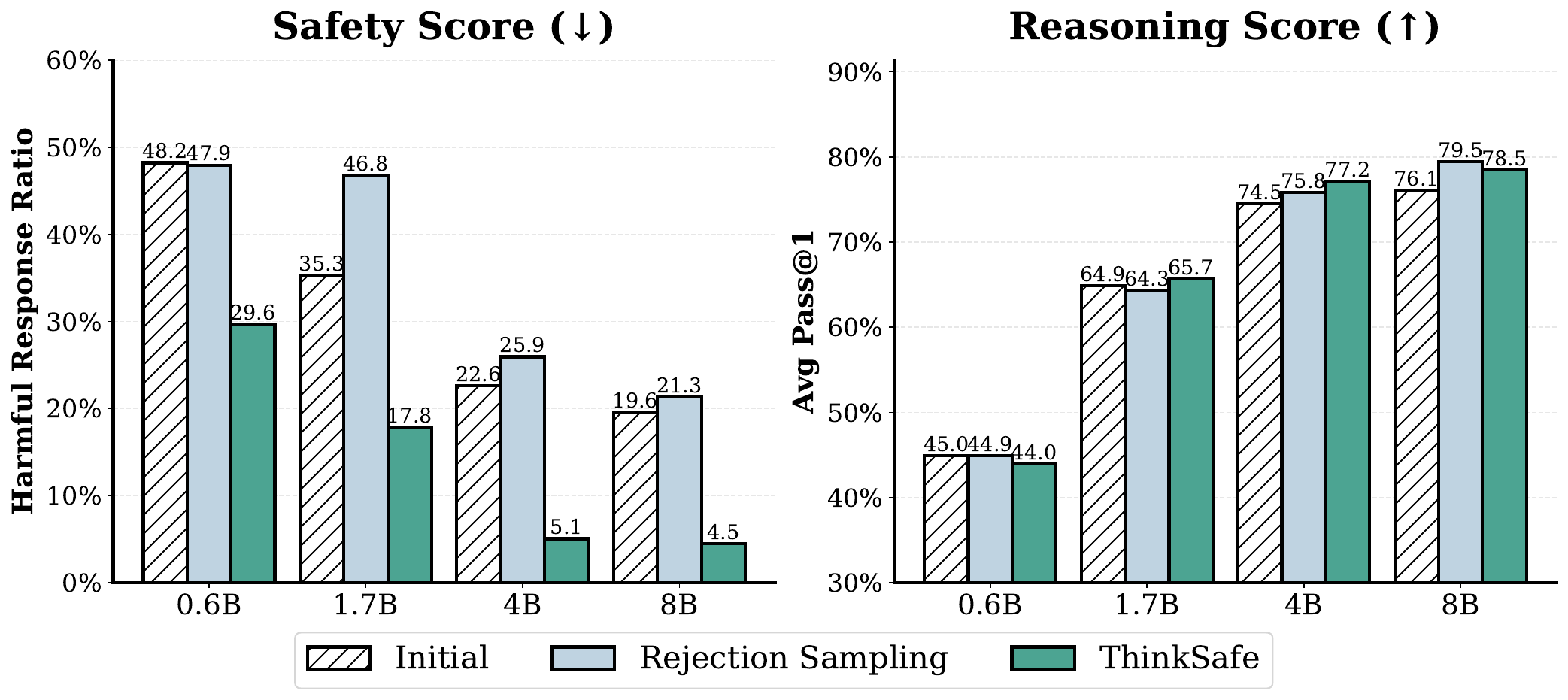}
        \vspace{-0.2in}
        \caption{We evaluate models trained on safety data generated via standard rejection sampling versus \textsc{ThinkSafe}.}
        \label{fig:rejection-sampling}
    \end{minipage}
    \hfill
    \begin{minipage}[t]{0.52\linewidth}
        \centering
        \includegraphics[width=1.0\linewidth]{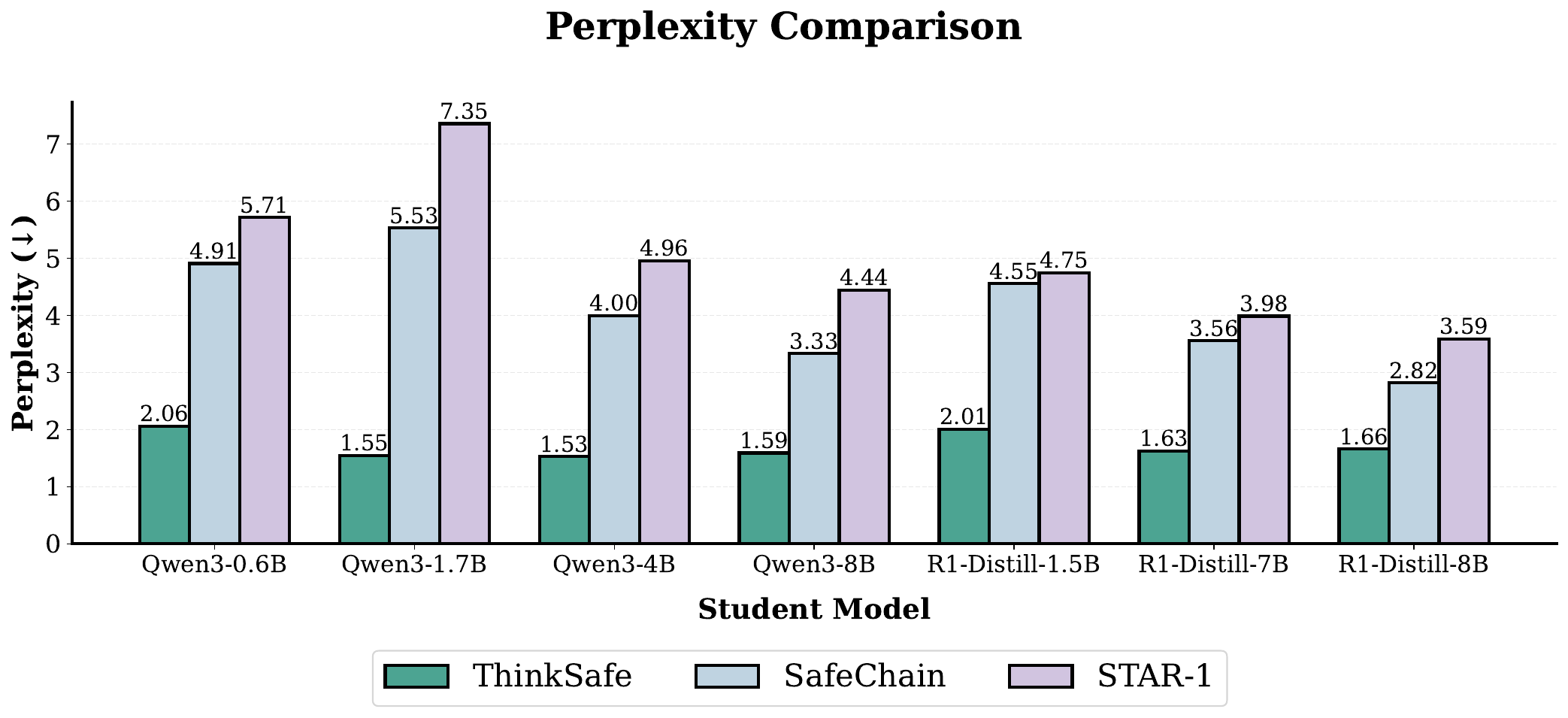}
        \vspace{-0.2in}
        \caption{Perplexity of generated safety dataset measured by the initial student models.}
        \label{fig:perplexity}
    \end{minipage}
    \vspace{-0.2in}
\end{figure}

\vspace{-0.1in}
\paragraph{Necessity of refusal steering.} To validate the role of refusal steering, we compare \textsc{ThinkSafe} against a rejection sampling baseline where responses are sampled directly from the student without $I_\text{refusal}$. Following the SafeChain protocol~\citep{jiang-etal-2025-safechain}, we sample 5 responses per prompt and retain the prompt only if \emph{all five} are verified as safe, selecting one at random. As shown in \Cref{fig:rejection-sampling}, removing refusal steering renders safety alignment ineffective. On Qwen3-8B, naive rejection sampling yields a safety score of 21.3\%, only marginally different from the initial 19.6\% and far worse than \textsc{ThinkSafe}'s 4.5\%. Without steering, $\alpha_\text{ref}(\rvx_h) \approx 0$ on difficult harmful prompts, so strict filtering discards nearly all training signal, leaving only the ``easy'' examples the model already handles. This confirms \Cref{assump:tilt}: $I_\text{refusal}$ activates a nontrivial tilt $\omega(\rvx_h) \gg 1$ that makes the KL-optimal target tractable, whereas $\omega \approx 1$ (no steering) is data-starved.

\subsection{Quantifying Distributional Discrepancy} 
\Cref{sec:theory} predicts any teacher source pays an excess cost $\mathrm{KL}(\pi^+ \| p_\text{ref}^+)$ over self-generation. We estimate the cross-entropy $H(\pi^+, p_\text{ref}) \coloneqq \E_{\rvy\sim\pi^+}[-\log p_\text{ref}(\rvy\mid\rvx)] = H(\pi^+) + \mathrm{KL}(\pi^+ \| p_\text{ref})$, where $H(\pi^+)$ is the Shannon entropy of the source, via the perplexity of each method's safety training data under the frozen student $p_\text{ref}$. As shown in~\Cref{fig:perplexity}, \textsc{ThinkSafe} consistently achieves the lowest perplexity across all model sizes, significantly outperforming teacher-distilled baselines (\eg, 1.55 vs.\ 7.35 for STAR-1 on Qwen3-1.7B). Since all methods produce comparable-length reasoning traces (\Cref{sec:appendix_stats}), the entropy term $H(\pi^+)$ is similar across methods, so these perplexity gaps primarily reflect the excess KL that teacher-distillation incurs, consistent with \Cref{sec:theory}.
\section{Conclusion}

In this work, we presented \textsc{ThinkSafe}, a framework that reconciles the tension between reasoning capabilities and safety alignment by addressing the distributional discrepancy inherent in external teacher supervision. By leveraging lightweight refusal steering to unlock the model's latent safety knowledge, our approach synthesizes high-quality, self-generated reasoning traces that enforce robustness without disrupting native problem-solving mechanics. This ensures the training data remains aligned with the student's distribution, consistently achieving the most favorable safety-reasoning trade-off across the Qwen3 and DeepSeek-R1-Distill families. Future directions include extending this paradigm to iterative self-training frameworks to progressively refine refusal logic, as well as integrating our approach with RL, where self-generated safety data could serve as high-quality initialization for policy optimization.

\vspace{-0.1in}
\paragraph{Limitations.} \textsc{ThinkSafe} assumes the student retains latent safety knowledge from prior alignment, which may not hold for base models without safety training. Its quality is also bounded by the external safety classifier used for filtering, and we approximate the on-policy objective with a static offline dataset that becomes increasingly off-policy during fine-tuning. Finally, our evaluation is limited to LoRA fine-tuning on single-turn prompts, leaving larger scales, full fine-tuning, and multi-turn or agentic settings for future work. See \Cref{sec:limitations} for extended discussion.

\bibliography{reference}
\newpage
\appendix
\section{Proofs}
\label{sec:app-proof}
For a fixed prompt $\rvx$, let $\Y$ be a finite response space. Let $\pi(\cdot\mid \rvx)$ denote the source policy used to generate candidate responses and let $\varphi(\rvx,\rvy)\in\{0,1\}$ be the safety filter.
Define
\[
\alpha_{\pi}(\rvx):=\Prb_{\rvy\sim \pi(\cdot\mid \rvx)}[\varphi(\rvx,\rvy)=1],
\qquad
\pi^+(\rvy\mid \rvx)\coloneqq \frac{\pi(\rvy\mid \rvx)\varphi(\rvx,\rvy)}{\alpha_{\pi}(\rvx)},
\]
and assume $\alpha_{\pi}(\rvx)>0$. Also define
\[
\alpha_{\mathrm{ref}}(\rvx)\coloneqq\Prb_{\rvy\sim p_{\mathrm{ref}}(\cdot\mid \rvx)}[\varphi(\rvx,\rvy)=1],
\qquad
p_{\mathrm{ref}}^+(y\mid x):=\frac{p_{\mathrm{ref}}(\rvy\mid \rvx)\varphi(\rvx,\rvy)}{\alpha_{\mathrm{ref}}(\rvx)},
\]
and assume $\alpha_{\mathrm{ref}}(\rvx)>0$.

\begin{lemma}[Student-relative safe projection]
For any distribution $r(\cdot\mid \rvx)$ supported on $\{\rvy\in\Y:\varphi(\rvx,\rvy)=1\}$,
\[
\KL \left(r(\cdot\mid \rvx)\,\|\,p_{\mathrm{ref}}(\cdot\mid \rvx)\right)
=
-\log \alpha_{\mathrm{ref}}(\rvx)
+
\KL\left(r(\cdot\mid \rvx)\,\|\,p_{\mathrm{ref}}^+(\cdot\mid \rvx)\right).
\]
Since the first term is independent of $r$ and $\KL\left(r(\cdot\mid \rvx)\,\|\,p_{\mathrm{ref}}^+(\cdot\mid \rvx)\right)\ge 0$ with equality iff $r=p_{\mathrm{ref}}^+(\cdot\mid \rvx)$, the unique safe distribution minimizing $\KL\left(r(\cdot\mid \rvx)\,\|\,p_{\mathrm{ref}}(\cdot\mid \rvx)\right)$ is $r^*=p_{\mathrm{ref}}^+(\cdot\mid \rvx)$.
\end{lemma}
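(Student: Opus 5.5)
The plan is a direct pointwise computation on the support of $r$, followed by Gibbs' inequality. Write $S\coloneqq\{\rvy\in\Y:\varphi(\rvx,\rvy)=1\}$. Since $r$ is supported on $S$, the sum defining $\KL(r\,\|\,p_{\mathrm{ref}})$ runs only over $\rvy\in S$ with $r(\rvy\mid\rvx)>0$, using the convention $0\log 0=0$. On $S$ we have $\varphi=1$, so the definition of $p_{\mathrm{ref}}^+$ gives the identity
\[
p_{\mathrm{ref}}(\rvy\mid\rvx)=\alpha_{\mathrm{ref}}(\rvx)\,p_{\mathrm{ref}}^+(\rvy\mid\rvx)\qquad(\rvy\in S).
\]
This is well defined because $\alpha_{\mathrm{ref}}(\rvx)>0$.

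Substituting this identity into the log-ratio gives, for each $\rvy\in S$,
\[
\log\frac{r(\rvy\mid\rvx)}{p_{\mathrm{ref}}(\rvy\mid\rvx)}=-\log\alpha_{\mathrm{ref}}(\rvx)+\log\frac{r(\rvy\mid\rvx)}{p_{\mathrm{ref}}^+(\rvy\mid\rvx)}.
\]
I then average against $r$. Because $\sum_{\rvy\in S}r(\rvy\mid\rvx)=1$, the constant term contributes exactly $-\log\alpha_{\mathrm{ref}}(\rvx)$, and the remaining term is $\KL(r\,\|\,p_{\mathrm{ref}}^+)$. That establishes the decomposition.

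The step needing the most care is the infinite case, that is, absolute continuity. If some $\rvy\in S$ has $r(\rvy\mid\rvx)>0$ but $p_{\mathrm{ref}}(\rvy\mid\rvx)=0$, then $p_{\mathrm{ref}}^+(\rvy\mid\rvx)=0$ as well. Both KL terms are then $+\infty$, and the identity still holds in the extended reals since $-\log\alpha_{\mathrm{ref}}(\rvx)$ is finite. Conversely, $r\ll p_{\mathrm{ref}}$ on $S$ holds if and only if $r\ll p_{\mathrm{ref}}^+$, so the two sides are finite simultaneously. I will handle this case separately before doing the finite computation.

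For the minimization claim, the first term does not depend on $r$. Gibbs' inequality, which follows from Jensen applied to the strictly convex $-\log$, gives $\KL(r\,\|\,p_{\mathrm{ref}}^+)\ge0$, with equality if and only if $r=p_{\mathrm{ref}}^+$. Moreover $p_{\mathrm{ref}}^+$ is itself a distribution supported on $S$, so it is feasible. Hence it is the unique minimizer over safe distributions, and the minimum value is $-\log\alpha_{\mathrm{ref}}(\rvx)$.
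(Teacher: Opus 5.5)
Your proposal is correct and follows essentially the same route as the paper: substitute $p_{\mathrm{ref}}(\rvy\mid\rvx)=\alpha_{\mathrm{ref}}(\rvx)\,p_{\mathrm{ref}}^+(\rvy\mid\rvx)$ on the support of $r$, average against $r$ to split off $-\log\alpha_{\mathrm{ref}}(\rvx)$, and conclude by nonnegativity of KL with equality iff the distributions coincide. Your separate handling of the infinite-KL case is a small addition that the paper leaves implicit, as is your check that $p_{\mathrm{ref}}^+$ is itself feasible.
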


\begin{proof}
Because $r(\cdot\mid \rvx)$ is supported on the accepted set, we have
\[
p_{\mathrm{ref}}(\rvy\mid \rvx)=\alpha_{\mathrm{ref}}(\rvx)\,p_{\mathrm{ref}}^+(\rvy\mid \rvx)
\qquad\text{for all } \rvy \text{ in the support of } r(\cdot\mid \rvx).
\]
Therefore,
\[
\begin{aligned}
\KL \bigl(r(\cdot\mid \rvx)\,\|\,p_{\mathrm{ref}}(\cdot\mid \rvx)\bigr)
&=\sum_{\rvy\in\Y} r(\rvy\mid \rvx)\log\frac{r(\rvy\mid \rvx)}{p_{\mathrm{ref}}(\rvy\mid \rvx)}\\
&=\sum_{\rvy\in\Y} r(\rvy\mid \rvx)\log\frac{r(\rvy\mid \rvx)}{\alpha_{\mathrm{ref}}(\rvx)p_{\mathrm{ref}}^+(\rvy\mid \rvx)}\\
&=-\log \alpha_{\mathrm{ref}}(\rvx)+\sum_{\rvy\in\Y} r(\rvy\mid \rvx)\log\frac{r(\rvy\mid \rvx)}{p_{\mathrm{ref}}^+(\rvy\mid \rvx)}\\
&=-\log \alpha_{\mathrm{ref}}(\rvx)+\KL \bigl(r(\cdot\mid \rvx)\,\|\,p_{\mathrm{ref}}^+(\cdot\mid \rvx)\bigr).
\end{aligned}
\]
The minimization claim follows because KL divergence is nonnegative and equals zero if and only if the two distributions are equal.
\end{proof}

\begin{proposition}[Source-centered improvement and equivalent KL identities]
Fix a prompt $\rvx$ and treat each complete response $\rvy\in\Y$ as a one-step action with binary reward $\varphi(\rvx,\rvy)$. Then:
\begin{enumerate}[label=(\alph*),leftmargin=1.8em]
    \item The accepted conditional $\pi^+(\cdot\mid \rvx)$ is the unique optimizer of
    \[
    \max_{r\in\Delta(\Y)}\;\E_{\rvy\sim r}[\varphi(\rvx,\rvy)]
    \qquad\text{subject to}\qquad
    \chisq \bigl(r\,\|\,\pi(\cdot\mid \rvx)\bigr)
    \le \frac{1-\alpha_{\pi}(\rvx)}{\alpha_{\pi}(\rvx)},
    \]
    where
    \[
    \chisq \bigl(r\,\|\,\pi(\cdot\mid \rvx)\bigr)
    :=\sum_{\rvy\in\Y}\pi(\rvy\mid \rvx)\left(\frac{r(\rvy)}{\pi(\rvy\mid \rvx)}-1\right)^2,
    \]
    with the convention that the divergence is $+\infty$ if $r$ is not absolutely continuous with respect to $\pi(\cdot\mid \rvx)$. This is the one-step finite-response specialization of \citet[Proposition~4.1]{russo2026}; the same Cauchy--Schwarz step appears in \citet[Appendix~A.3]{russo2026}.

    \item The KL shift from the frozen student satisfies
    \[
    \KL \bigl(\pi^+(\cdot\mid \rvx)\,\|\,p_{\mathrm{ref}}(\cdot\mid \rvx)\bigr)
    =
    -\log \alpha_{\mathrm{ref}}(\rvx)
    +
    \KL \bigl(\pi^+(\cdot\mid \rvx)\,\|\,p_{\mathrm{ref}}^+(\cdot\mid \rvx)\bigr).
    \]
    Equivalently,
    \[
    \KL\bigl(\pi^+(\cdot\mid \rvx)\,\|\,p_{\mathrm{ref}}(\cdot\mid \rvx)\bigr)
    =
    -\log \alpha_{\pi}(\rvx)
    +
    \E_{\rvy\sim \pi^+(\cdot\mid \rvx)} \left[\log\frac{\pi(\rvy\mid \rvx)}{p_{\mathrm{ref}}(\rvy\mid \rvx)}\right],
    \]
    with
    \[
    \E_{\rvy\sim \pi^+(\cdot\mid \rvx)} \left[\log\frac{\pi(\rvy\mid \rvx)}{p_{\mathrm{ref}}(\rvy\mid \rvx)}\right]
    =
    \log\frac{\alpha_{\pi}(\rvx)}{\alpha_{\mathrm{ref}}(\rvx)}
    +
    \KL \bigl(\pi^+(\cdot\mid \rvx)\,\|\,p_{\mathrm{ref}}^+(\cdot\mid \rvx)\bigr).
    \]
\end{enumerate}
\end{proposition}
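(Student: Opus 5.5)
For part (a), I will compute the objective and the divergence for the candidate $\pi^+$ and then show that any feasible $r$ with reward at least that of $\pi^+$ must equal it. Write $\alpha=\alpha_\pi(\rvx)$ and $S=\{\rvy:\varphi(\rvx,\rvy)=1\}$.

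\emph{Feasibility and value of $\pi^+$.} The reward of $\pi^+$ is $1$, the maximum possible value of the objective. Its divergence is
\[
\chisq(\pi^+\,\|\,\pi)=\sum_{\rvy}\frac{\pi^+(\rvy)^2}{\pi(\rvy)}-1=\sum_{\rvy\in S}\frac{\pi(\rvy)}{\alpha^2}-1=\frac1\alpha-1=\frac{1-\alpha}{\alpha},
\]
so $\pi^+$ is feasible, lying exactly on the boundary of the ball, and attains the optimum.

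\emph{Uniqueness.} Any optimizer must have reward $1$, so it is supported on $S$. Absolute continuity is needed for finite divergence, so $r\ll\pi$. For such $r$, Cauchy--Schwarz gives
\[
1=\Bigl(\sum_{\rvy\in S} r(\rvy)\Bigr)^2=\Bigl(\sum_{\rvy\in S}\frac{r(\rvy)}{\sqrt{\pi(\rvy)}}\sqrt{\pi(\rvy)}\Bigr)^2\le \sum_{\rvy\in S}\frac{r(\rvy)^2}{\pi(\rvy)}\cdot\alpha .
\]
Hence $\chisq(r\,\|\,\pi)=\sum r^2/\pi-1\ge 1/\alpha-1$. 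Combined with the constraint, this inequality must hold with equality. Equality in Cauchy--Schwarz forces $r(\rvy)/\sqrt{\pi(\rvy)}\propto\sqrt{\pi(\rvy)}$ on $S$, i.e.\ $r\propto\pi$ on $S$, which means $r=\pi^+$. This equality case is the only delicate step. The care needed is in handling points of $S$ with $\pi(\rvy)=0$, which the absolute-continuity convention rules out.

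For part (b), the first identity is the earlier Lemma (student-relative safe projection) applied with $r=\pi^+$, which is supported on $S$. For the second identity, I will expand directly. On the support of $\pi^+$ we have $\pi^+(\rvy)=\pi(\rvy)/\alpha_\pi$, so
\[
\log\frac{\pi^+}{p_{\mathrm{ref}}}=-\log\alpha_\pi+\log\frac{\pi}{p_{\mathrm{ref}}},
\]
and taking the expectation under $\pi^+$ gives the claimed formula.

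For the third identity, I will write $\pi/p_{\mathrm{ref}}=(\alpha_\pi\pi^+)/(\alpha_{\mathrm{ref}}p^+_{\mathrm{ref}})$ on $S$. Taking the expectation of its logarithm under $\pi^+$ yields $\log(\alpha_\pi/\alpha_{\mathrm{ref}})+\KL(\pi^+\,\|\,p_{\mathrm{ref}}^+)$. Alternatively, it follows by subtracting the first two identities.

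In both parts I will treat possible infinities consistently: if $\pi^+\not\ll p_{\mathrm{ref}}$, then both sides of every identity are $+\infty$.
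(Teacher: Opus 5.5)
Your proposal is correct and follows essentially the same route as the paper. For (a) you use the Cauchy--Schwarz bound $\sum_{S} r^2/\pi \ge 1/\alpha_\pi$ on safe-supported $r$ together with its equality case, and for (b) you use the Lemma plus the factorizations $\pi=\alpha_\pi\pi^+$ and $p_{\mathrm{ref}}=\alpha_{\mathrm{ref}}p_{\mathrm{ref}}^+$ on the safe set. The differences are cosmetic: you derive the second identity of (b) directly rather than by rearranging the third, and you treat the $\pi(\rvy)=0$ points and the infinite-KL case explicitly, where the paper leaves them implicit.
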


\begin{proof}
\textbf{Part (a).}
This is the one-step finite-response specialization of \citet[Proposition~4.1]{russo2026}, and the same proof pattern appears in \citet[Appendix~A.3]{russo2026}.
For completeness, let
\[
S_\rvx\coloneqq\{\rvy\in\Y:\varphi(\rvx,\rvy)=1\}.
\]
Any optimizer must achieve reward $1$ and hence be supported on $S_\rvx$.
For any such $r$,
\[
\chisq \bigl(r\,\|\,\pi(\cdot\mid \rvx)\bigr)
=\sum_{\rvy\in S_\rvx}\frac{r(\rvy)^2}{\pi(\rvy\mid \rvx)}-1,
\]
and Cauchy--Schwarz gives
\[
\left(\sum_{\rvy\in S_\rvx} r(\rvy)\right)^2
\le
\left(\sum_{\rvy\in S_\rvx}\pi(\rvy\mid \rvx)\right)
\left(\sum_{y\in S_\rvx}\frac{r(\rvy)^2}{\pi(\rvy\mid \rvx)}\right),
\]
so
\[
\sum_{\rvy\in S_\rvx}\frac{r(\rvy)^2}{\pi(\rvy\mid \rvx)}\ge \frac{1}{\alpha_{\pi}(\rvx)}
\qquad\Longrightarrow\qquad
\chisq \bigl(r\,\|\,\pi(\cdot\mid \rvx)\bigr)\ge \frac{1-\alpha_{\pi}(\rvx)}{\alpha_{\pi}(\rvx)}.
\]
Equality holds if and only if $r(\rvy)\propto \pi(\rvy\mid \rvx)$ on $S_\rvx$, which after normalization gives $r=\pi^+$.
Since $\chisq \bigl(\pi^+(\cdot\mid \rvx)\,\|\,\pi(\cdot\mid \rvx)\bigr)=(1-\alpha_{\pi}(\rvx))/\alpha_{\pi}(\rvx)$, $\pi^+$ is the unique optimal feasible distribution.

\paragraph{Part (b).}
The first identity is Lemma~1 with $r=\pi^+$.
For the equivalent source expansion, note that on the support of $\pi^+(\cdot\mid \rvx)$,
\[
\pi(\rvy\mid \rvx)=\alpha_{\pi}(\rvx)\,\pi^+(y\mid \rvx),
\qquad
p_{\mathrm{ref}}(\rvy\mid \rvx)=\alpha_{\mathrm{ref}}(x)\,p_{\mathrm{ref}}^+(\rvy\mid \rvx).
\]
Therefore,
\[
\log\frac{\pi(\rvy\mid \rvx)}{p_{\mathrm{ref}}(\rvy\mid \rvx)}
=
\log\frac{\alpha_{\pi}(\rvx)}{\alpha_{\mathrm{ref}}(\rvx)}
+
\log\frac{\pi^+(\rvy\mid \rvx)}{p_{\mathrm{ref}}^+(\rvy\mid \rvx)}.
\]
Averaging under $\pi^+(\cdot\mid \rvx)$ yields
\[
\E_{\rvy\sim \pi^+(\cdot\mid \rvx)} \left[\log\frac{\pi(\rvy\mid \rvx)}{p_{\mathrm{ref}}(\rvy\mid \rvx)}\right]
=
\log\frac{\alpha_{\pi}(\rvx)}{\alpha_{\mathrm{ref}}(\rvx)}
+
\KL \bigl(\pi^+(\cdot\mid \rvx)\,\|\,p_{\mathrm{ref}}^+(\cdot\mid \rvx)\bigr),
\]
which rearranges to the second displayed identity.
\end{proof}

\paragraph{Note.} The equivalent source expansion is algebraically correct, but its second term is not sign-definite in general; the sign-definite decomposition is the projection form through $p_{\mathrm{ref}}^+(\cdot\mid \rvx)$.

\bigskip
\begin{assumption}[Refusal tilt]
For a harmful prompt $x_h$, suppose there exists $\omega(x_h)> 1$ such that the refusal instruction reweights safe outputs by $\omega(x_h)$ while leaving unsafe outputs unchanged:
\[
p_{\mathrm{ref}}(\rvy\mid I_{\mathrm{refusal}},\rvx_h)
\;\propto\;
\begin{cases}
\omega(\rvx_h)\cdot p_{\mathrm{ref}}(\rvy\mid \rvx_h) & \text{if } \varphi(\rvx_h,\rvy)=1,\\[3pt]
p_{\mathrm{ref}}(\rvy\mid \rvx_h) & \text{if } \varphi(\rvx_h,\rvy)=0.
\end{cases}
\]
That is, relative probabilities within the safe set and within the unsafe set are preserved; only the odds between the two groups change.
The normalizing constant is $Z(x_h)=1+\bigl(\omega(x_h)-1\bigr)\,\alpha_{\mathrm{ref}}(x_h)$.    
\end{assumption}

\begin{corollary}[Refusal steering as cost-reducing preconditioning]
Let $\pi_h(\cdot\mid \rvx_h) \coloneqq p_{\mathrm{ref}}(\cdot\mid I_{\mathrm{refusal}},\rvx_h)$ and $\alpha_I(\rvx_h):=\Prb_{\rvy\sim \pi_h}[\varphi(\rvx_h,\rvy)=1]$.
Under Assumption~1:
\begin{enumerate}[label=(\alph*),leftmargin=1.8em]
    \item $\pi_h^+(\cdot\mid \rvx_h)=p_{\mathrm{ref}}^+(\cdot\mid \rvx_h)$ and $\alpha_I(\rvx_h)=\dfrac{\omega(\rvx_h)\,\alpha_{\mathrm{ref}}(\rvx_h)}{1+\bigl(\omega(\rvx_h)-1\bigr)\,\alpha_{\mathrm{ref}}(\rvx_h)}$.

    \item Let $T_m^{\mathrm{base}}$ and $T_m^{\mathrm{steer}}$ denote the numbers of raw generations needed to collect $m$ accepted traces under $p_{\mathrm{ref}}(\cdot\mid \rvx_h)$ and $\pi_h(\cdot\mid \rvx_h)$ respectively. Then
    \[
    \E[T_m^{\mathrm{base}}]=\frac{m}{\alpha_{\mathrm{ref}}(\rvx_h)},
    \qquad
    \mathrm{Var}(T_m^{\mathrm{base}})
    =\frac{m\bigl(1-\alpha_{\mathrm{ref}}(\rvx_h)\bigr)}{\alpha_{\mathrm{ref}}(\rvx_h)^2},
    \]
    \[
    \E[T_m^{\mathrm{steer}}]=\frac{m}{\alpha_I(\rvx_h)},
    \qquad
    \mathrm{Var}(T_m^{\mathrm{steer}})
    =\frac{m\bigl(1-\alpha_I(\rvx_h)\bigr)}{\alpha_I(\rvx_h)^2}.
    \]
    Moreover, for the mean ratio,
    \[
    \frac{\E[T_m^{\mathrm{steer}}]}
         {\E[T_m^{\mathrm{base}}]}
    =\alpha_{\mathrm{ref}}(\rvx_h)
     +\frac{1-\alpha_{\mathrm{ref}}(\rvx_h)}{\omega(\rvx_h)},
    \]
    and when $0<\alpha_{\mathrm{ref}}(\rvx_h)<1$,
    \[
    \frac{\mathrm{Var}(T_m^{\mathrm{steer}})}
         {\mathrm{Var}(T_m^{\mathrm{base}})}
    \le \frac{1}{\omega(\rvx_h)}.
    \]
\end{enumerate}
Thus refusal steering acts as a cost-reducing proposal distribution for filtered SFT: it leaves the accepted target $p_{\mathrm{ref}}^+$ unchanged but lowers the sampling cost needed to collect accepted traces.
\end{corollary}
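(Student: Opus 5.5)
The plan is to compute everything directly from the normalized form of Assumption~1, writing $\alpha\coloneqq\alpha_{\mathrm{ref}}(\rvx_h)$, $\omega\coloneqq\omega(\rvx_h)$ and $S\coloneqq\{\rvy:\varphi(\rvx_h,\rvy)=1\}$.

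\textbf{Part (a).} First verify the normalizing constant:
\[
Z=\omega\sum_{\rvy\in S}p_{\mathrm{ref}}(\rvy\mid\rvx_h)+\sum_{\rvy\notin S}p_{\mathrm{ref}}(\rvy\mid\rvx_h)=\omega\alpha+(1-\alpha)=1+(\omega-1)\alpha .
\]
Hence $\pi_h(\rvy\mid\rvx_h)=\omega\,p_{\mathrm{ref}}(\rvy\mid\rvx_h)\varphi(\rvx_h,\rvy)/Z$ on $S$. Summing over $S$ gives $\alpha_I=\omega\alpha/Z$, which is the stated formula. Dividing the expression for $\pi_h$ on $S$ by $\alpha_I$ cancels the factor $\omega/Z$ and leaves $p_{\mathrm{ref}}\varphi/\alpha=p_{\mathrm{ref}}^+$. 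This establishes $\pi_h^+=p_{\mathrm{ref}}^+$. Note that $\alpha>0$ implies $\alpha_I>0$, so the filtered distribution is well defined.

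\textbf{Part (b).} Treat generations as i.i.d.\ Bernoulli trials, since the prompt is fixed and samples are drawn independently. Then $T_m$ is negative binomial, a sum of $m$ independent geometric variables with success probability $p$. This gives mean $m/p$ and variance $m(1-p)/p^2$; apply it with $p=\alpha$ and with $p=\alpha_I$. The mean ratio is
\[
\frac{\alpha}{\alpha_I}=\frac{Z}{\omega}=\frac{1+(\omega-1)\alpha}{\omega}=\alpha+\frac{1-\alpha}{\omega}.
\]
For the variance ratio, write it as
\[
\frac{(1-\alpha_I)\,\alpha^2}{(1-\alpha)\,\alpha_I^2}.
\]
Compute $1-\alpha_I=(Z-\omega\alpha)/Z=(1-\alpha)/Z$ and $\alpha/\alpha_I=Z/\omega$. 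The ratio then simplifies to
\[
\frac{1}{Z}\cdot\frac{Z^2}{\omega^2}=\frac{Z}{\omega^2}.
\]
It remains to show $Z/\omega^2\le 1/\omega$, which is equivalent to $Z\le\omega$. This holds because $Z=1+(\omega-1)\alpha\le 1+(\omega-1)=\omega$ when $\alpha\le1$ and $\omega>1$. The inequality is strict when $\alpha<1$.

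\textbf{Expected obstacle.} The only step needing care is this variance simplification, in particular keeping track of $1-\alpha_I=(1-\alpha)/Z$. Everything else is direct substitution. As a final check, I would confirm that the main-text claim $\alpha_I\ge\alpha$ follows from $Z\le\omega$ as well.
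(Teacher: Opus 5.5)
Your proposal is correct and follows the paper's proof essentially step for step: the same normalizer $Z=1+(\omega-1)\alpha$, the cancellation of $\omega/Z$ giving $\pi_h^+=p_{\mathrm{ref}}^+$, the negative-binomial (sum of geometrics) mean and variance, and the simplification via $1-\alpha_I=(1-\alpha)/Z$ to a variance ratio of $Z/\omega^2\le 1/\omega$ because $Z\le\omega$. One small point: the variance-ratio formula itself needs $\alpha<1$ so that $\mathrm{Var}(T_m^{\mathrm{base}})\neq 0$, which the statement assumes; the condition $\alpha\le 1$ is only what you use for the final bound.
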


\begin{proof}
\textbf{Part (a).}
Write $Z=1+(\omega-1)\alpha_{\mathrm{ref}}$ (suppressing~$\rvx_h$).
For $\rvy$ with $\varphi(\rvx_h,\rvy)=1$, the assumption gives $\pi_h(y\mid \rvx_h)=\omega\,p_{\mathrm{ref}}(\rvy\mid \rvx_h)/Z$.
Summing over the safe set yields $\alpha_I=\omega\,\alpha_{\mathrm{ref}}/Z$.
The accepted conditional is
\[
\pi_h^+(\rvy\mid \rvx_h)
=\frac{\pi_h(\rvy\mid \rvx_h)\,\varphi(\rvx_h,y)}{\alpha_I}
=\frac{\omega\,p_{\mathrm{ref}}(\rvy\mid \rvx_h)/Z}
      {\omega\,\alpha_{\mathrm{ref}}/Z}
=\frac{p_{\mathrm{ref}}(\rvy\mid \rvx_h)}{\alpha_{\mathrm{ref}}}
=p_{\mathrm{ref}}^+(\rvy\mid \rvx_h).
\]

\textbf{Part (b).}
Collecting $m$ accepted traces by rejection sampling requires $T_m=\sum_{i=1}^m G_i$ where each $G_i\sim\mathrm{Geometric}(\alpha)$, giving $\E[T_m]=m/\alpha$ and $\mathrm{Var}(T_m)=m(1-\alpha)/\alpha^2$.

\emph{Mean ratio.}
$\E[T_m^{\mathrm{steer}}]/\E[T_m^{\mathrm{base}}]
=\alpha_{\mathrm{ref}}/\alpha_I=Z/\omega
=1/\omega+(1-1/\omega)\alpha_{\mathrm{ref}}
=\alpha_{\mathrm{ref}}+(1-\alpha_{\mathrm{ref}})/\omega$.

\emph{Variance ratio.}
From part~(a), $1-\alpha_I=(1-\alpha_{\mathrm{ref}})/Z$, so
\[
\frac{\mathrm{Var}(T_m^{\mathrm{steer}})}
     {\mathrm{Var}(T_m^{\mathrm{base}})}
=\frac{(1-\alpha_I)\,\alpha_{\mathrm{ref}}^2}
      {(1-\alpha_{\mathrm{ref}})\,\alpha_I^2}
=\frac{\alpha_{\mathrm{ref}}^2}{Z\,\alpha_I^2}
=\frac{Z}{\omega^2}
=\frac{1+(\omega-1)\alpha_{\mathrm{ref}}}{\omega^2}.
\]
Since $\alpha_{\mathrm{ref}}\le 1$, the numerator is at most $\omega$, yielding the bound $1/\omega$.

\emph{Concrete example.}
If $\alpha_{\mathrm{ref}}=0.05$, then $\mathrm{Var}(T_m^{\mathrm{base}})=380\,m$.
If steering raises acceptance above $0.5$ (i.e.\ $\omega\ge 19$), then $\mathrm{Var}(T_m^{\mathrm{steer}})\le 2\,m$---a roughly $190$-fold reduction.
\end{proof}

\section{\textsc{ThinkSafe}}
\subsection{Sampling details}
All prompts used in \textsc{ThinkSafe} are from the \href{https://huggingface.co/datasets/UWNSL/SafeChain}{SafeChain} dataset and processed by each model using our \textsc{ThinkSafe} framework. For Qwen3 model family, we sample one response for each prompt with top-p 0.95, top-k 20, temperature 0.6, and maximum token limit 16,384. For DeepSeek-R1-Distill family, we use the same hyperparameter except that top-k is set to 0. Responses that are classified as unsafe by the Llama-Guard-3, denoted as $\varphi$, are excluded from the analysis. \Cref{tab:refusal_rate} shows the ratio of filtered samples per model. Larger Qwen3 models (4B and 8B) retain over 99\% of both benign and harmful samples, while the smaller Qwen3-0.6B and Qwen3-1.7B filter out 4.84\% and 3.29\% of harmful samples, respectively. The R1-Distill-1.5B model exhibits substantially higher filtering rates. Overall, larger models tend to preserve a greater portion of the original data, suggesting more stable and consistent response distributions after filtering.

\begin{table}[h]
    \caption{Filtered ratio ($\%$) per model.}
    \label{tab:refusal_rate}
    \centering
    \small
    \resizebox{0.6\textwidth}{!}{
    \begin{tabular}{lccccccc}
        \toprule
            \multirow{2}{*}[-5pt]{\shortstack{Sample category}} & \multicolumn{4}{c}{Qwen3} & \multicolumn{3}{c}{DeepSeek-R1-Distill} \\

            \cmidrule[0.4pt](lr){2-5} \cmidrule[0.4pt](lr){6-8}
            & 0.6B & 1.7B & 4B & 8B & 1.5B & 7B & 8B \\

            \midrule

            \multicolumn{1}{c}{Benign} & 1.07 & 0.93 & 0.63 & 0.74 & 11.01 & 2.69 & 0.81 \\
            
            \multicolumn{1}{c}{Harmful} & 4.84 & 3.29 & 0.35 & 0.16 & 12.86 & 2.93 & 0.48 \\
            
        \bottomrule
    \end{tabular}}
\vspace{-0.1in}
\end{table}

\subsection{Statistics}
\label{sec:appendix_stats}
We report the statistics of the responses generated by \textsc{ThinkSafe} in~\Cref{fig:statistics_qwen,fig:statistics_R1}. Here, $N_h$ and $N_b$ denote the numbers of harmful and benign prompts, respectively, while $\mu_h$ and $\mu_b$ represent the average response lengths (in tokens) for harmful and benign queries.

Across both the Qwen3 and R1-distilled model series, benign responses consistently exhibit longer generation lengths than harmful ones, reflecting the presence of more detailed reasoning traces. Moreover, as model size increases, both harmful and benign responses tend to become longer and more stable in distribution.

\begin{figure}[h]
    \centering
    \includegraphics[width=1.0\linewidth]{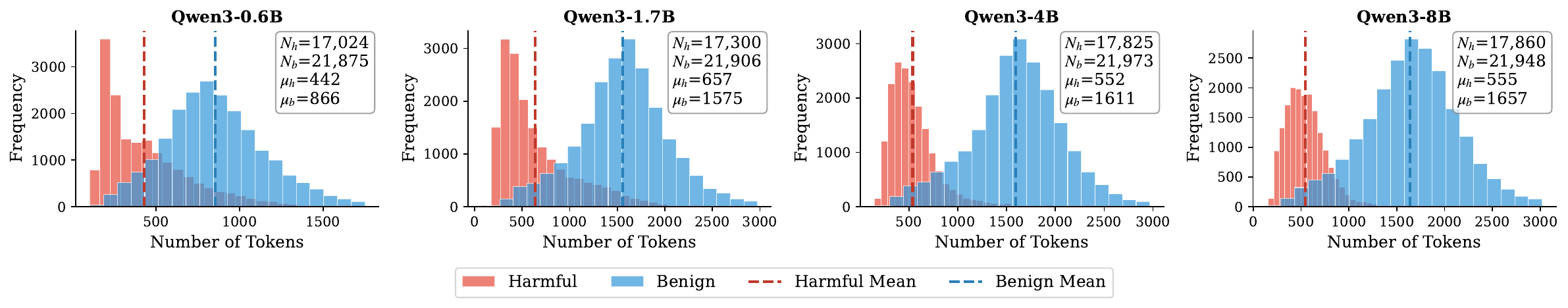}
    \caption{\small Statistics of \textsc{ThinkSafe} in Qwen3 model series. Top 1\% outliers by length are excluded for better interpretability.}
    \label{fig:statistics_qwen}
    \vspace{-0.3in}
\end{figure}

\begin{figure}[h]
    \centering
    \includegraphics[width=0.8\linewidth]{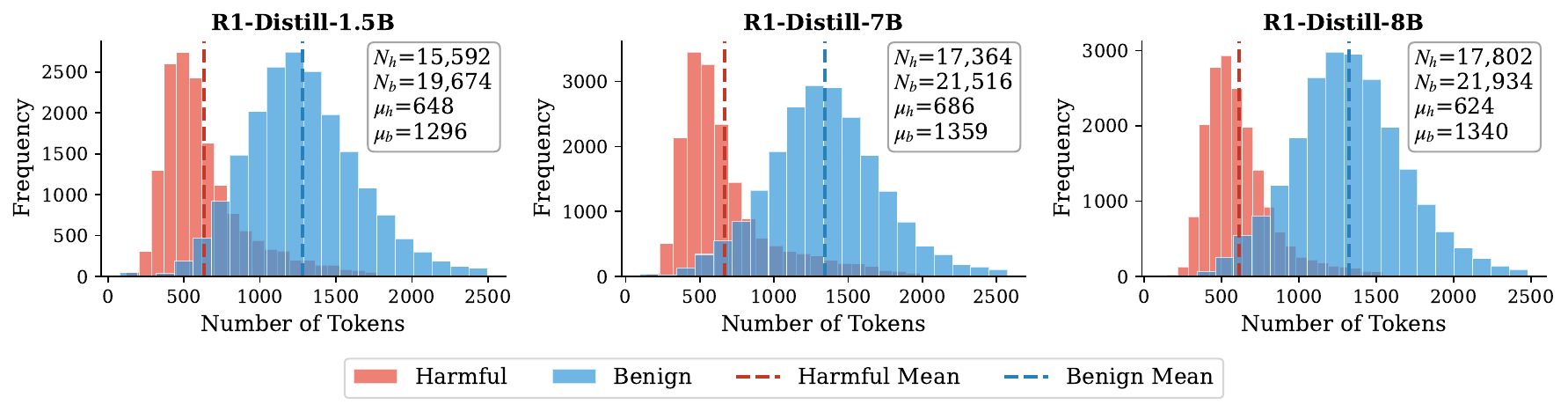}
    \caption{\small Statistics of \textsc{ThinkSafe} in DeepSeek-R1-Distill model series. Top 1\% outliers by length are excluded for better interpretability.}
    \label{fig:statistics_R1}
\end{figure}

\section{Experimental Details} For the AIME 2024, GSM8K, MATH500, and GPQA datasets, we use the \href{https://github.com/NovaSky-AI/SkyThought}{SkyThought}~\citep{skythought} library to evaluate models. We sample 8 responses for each prompt using dataset-specific prompts and report the average pass@1. For the Qwen model family, we use a temperature of 0.6, top-p of 0.95, and top-k of 20, with a maximum token limit of 32,768. For the DeepSeek-R1-Distill family, we use a temperature of 0.6 and top-p of 0.95, with a maximum token limit of 32,768.

\begin{tcolorbox}[llmprompt, title={AIME 2024}]
Please reason step by step, and put your final answer within \textbackslash\textbackslash
boxed\{\{\}\}.  \texttt{\{problem\}}
\end{tcolorbox}

\begin{tcolorbox}[llmprompt, title={GSM8K}]
Given the following problem, reason and give a final answer to the problem. \\
Problem: \{\texttt{question}\} \\
Your response should end with ``The final answer is [answer]" where [answer] is the response to the problem.\end{tcolorbox}

\begin{tcolorbox}[llmprompt, title={GPQA}]
Return your final response within \textbackslash\textbackslash
boxed\{\{\}\} and only include the letter choice (A, B, C, or D)  as your final response. \texttt{\{problem\}}
\end{tcolorbox}

\begin{tcolorbox}[llmprompt, title={MATH500}]
Please reason step by step, and put your final answer within \textbackslash\textbackslash
boxed\{\{\}\}.  \texttt{\{problem\}}
\end{tcolorbox}

\label{app:exp-detail}

\section{Baseline Details}
\label{sec:appendix_baseline}
\begin{table}[h]
    \caption{Detailed hyperparameters for baselines.}
    \label{tab:baselines}
    \centering
    \small
    \resizebox{0.6\textwidth}{!}{
    \begin{tabular}{lccc}
        \toprule
            Method & Epochs & Source & Sample size \\
            \midrule
            
            DirectRefusal & 5 & WildJailbreak & 1,000 \\

            SafeChain & 3 & WildJailbreak & 40,000 \\

            \raisebox{4.5pt}{STAR-1} & \raisebox{4.5pt}{5} & \shortstack{Mixture of harmful datasets\\(See \citealp{wang2025star1saferalignmentreasoning} for details)} & \raisebox{4.5pt}{1,000} \\

            SafeKey & 5 & STAR-1 & 1,000 \\

            SafePath & 4 & WildJailbreak & 400 \\

            \textsc{ThinkSafe} & 3 & SafeChain & Varies by scale \\
            
        \bottomrule
    \end{tabular}}
\vspace{-0.1in}
\end{table}

To ensure consistency and reproducibility, we adopt the same hyperparameters as specified in the original papers for all baselines, as summarized in ~\Cref{tab:baselines}. For \textsc{ThinkSafe}, since the sample size varies across model scales, we refer readers to ~\Cref{fig:statistics_qwen,fig:statistics_R1} for detailed configurations.

\section{Additional Experiments}
In this section, we present additional experiments to empirically support the effectiveness of the
proposed \textsc{ThinkSafe}.

\label{sec:appendix_exp}
\subsection{Necessity of safety reasoning on Qwen families}
\begin{figure}[H]
    \centering
    \includegraphics[width=0.6\linewidth]{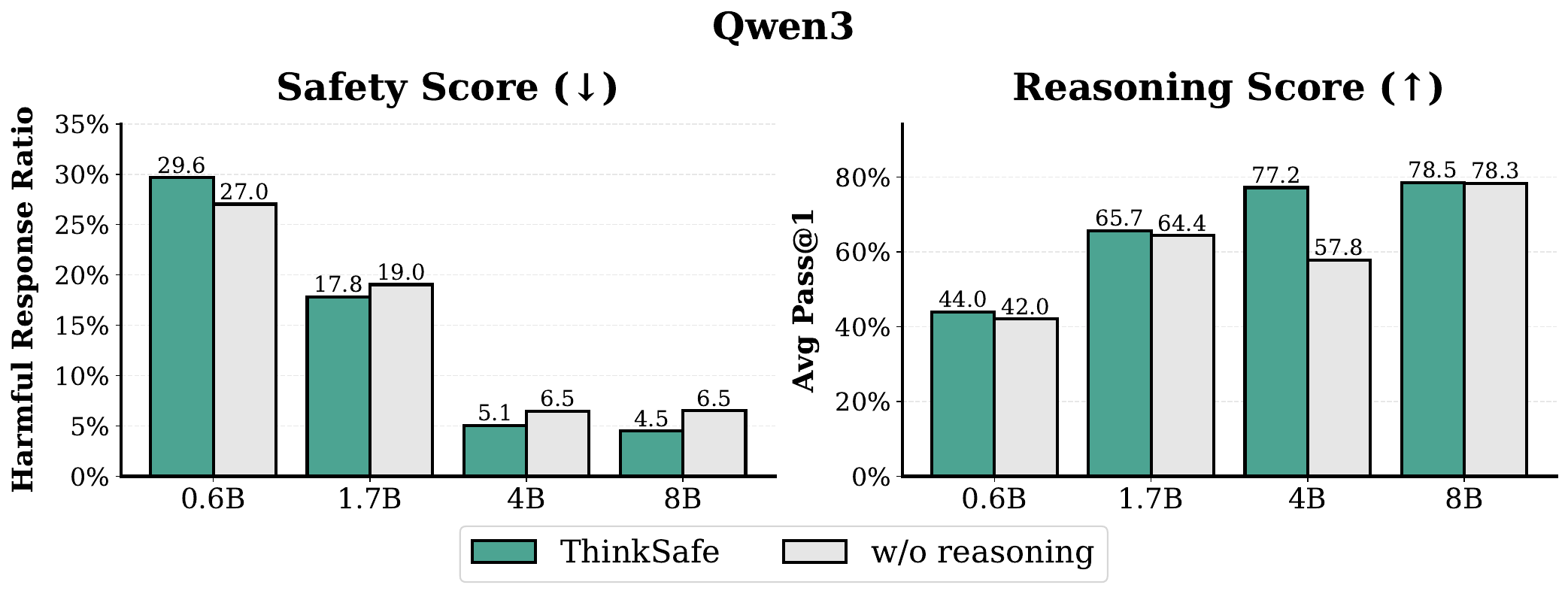}
    \vspace{-0.1in}
    \caption{Ablation of safety reasoning in Qwen3 model series. }
    \label{fig:think-ablation-qwen}
    \vspace{-0.1in}
\end{figure}

To further analyze the effect of reasoning on the Qwen family, we ablate reasoning traces from refusal responses $\rvy_h^{(i)}$, while retaining full chain-of-thought for benign responses $\rvy_b^{(i)}$. Both types of responses are generated by each model in the Qwen series. As shown in ~\Cref{fig:think-ablation-qwen}, removing reasoning traces consistently degrades both safety and reasoning performance. In particular, the Qwen3-4B model exhibits a substantial drop in reasoning accuracy, with Avg Pass@1 decreasing from 77.2\% to 57.8\%, accompanied by an increase in harmful response ratio. These results indicate that explicit reasoning plays a critical role not only in maintaining reasoning capability but also in supporting safety-aligned behavior.

\subsection{Cross-model distillation for larger models}
\begin{figure}[H]
    \centering
    \includegraphics[width=0.7\linewidth]{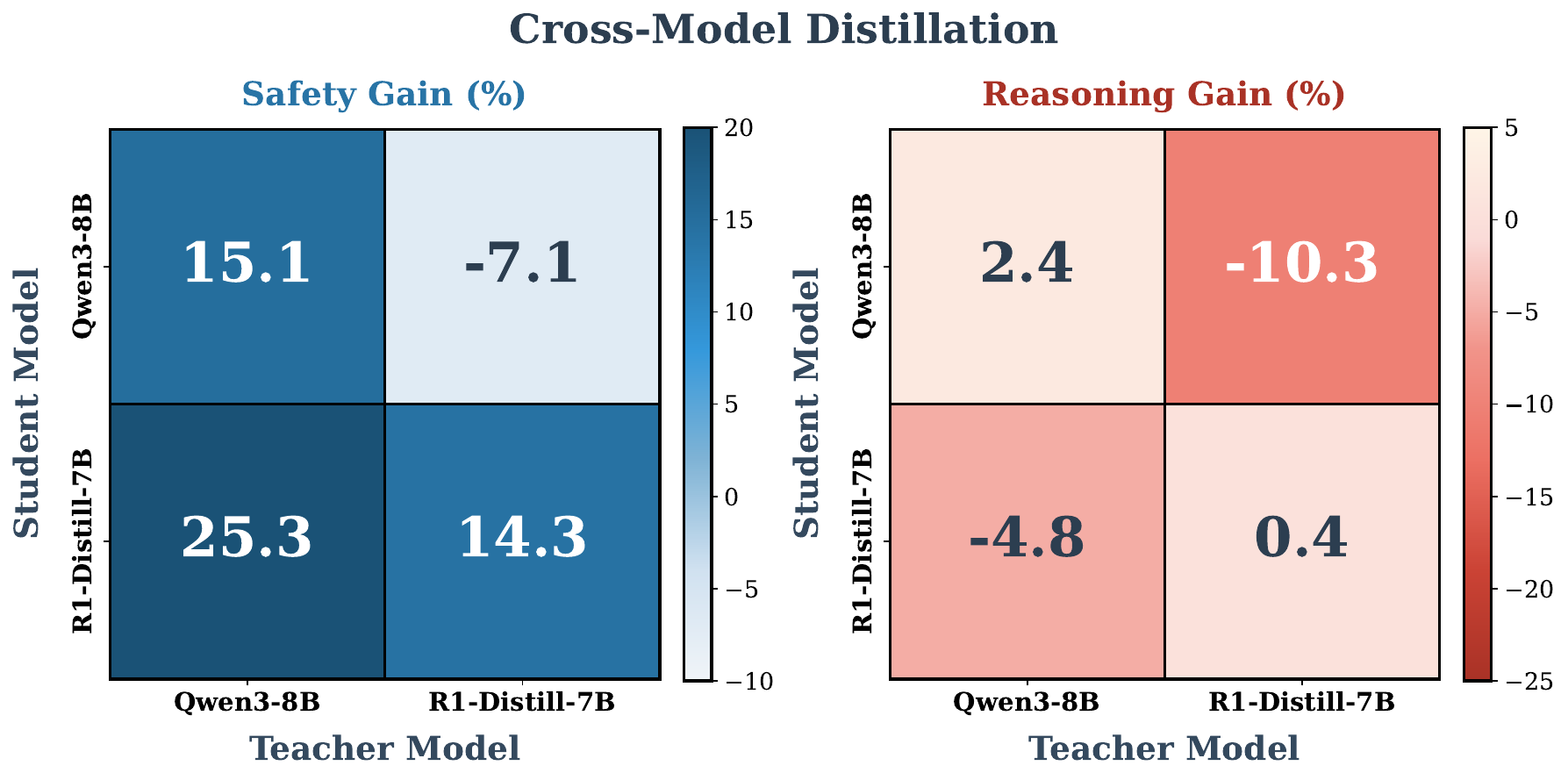}
    \vspace{-0.1in}
    \caption{\small Safety and reasoning performance gain using a different family of teacher model with similar size.}
    \label{fig:exchange-larger}
    \vspace{-0.1in}
\end{figure}
We conduct a cross-model distillation experiment using larger models than those in~\Cref{fig:exchange}, namely Qwen3-8B and R1-Distill-Qwen-7B. ~\Cref{fig:exchange-larger} shows a similar trend to the results observed with smaller models; while cross-model training can occasionally improve safety (e.g., DeepSeek-R1-Distill achieves a 25.3\% safety gain when trained on Qwen3-8B data), it consistently leads to notable degradation in reasoning performance across all reciprocal training settings.

\subsection{Refusal steering strategies}
\begin{table}[h]
\centering
\small
\renewcommand{\arraystretch}{1.5}
\caption{Alternative refusal steering strategies used for on-policy data generation.}
\label{tab:refusal_instructions}
\begin{tabular}{lp{10cm}}
\hline
\textbf{Strategy} & \textbf{Prompt Template} \\ \hline
\textsc{ThinkSafe} & The following prompt is harmful. You should refuse to answer the prompt.\textbackslash n\textbackslash n Prompt:\{prompt\} \\ \hline
Suffix & \{prompt\}\textbackslash n\textbackslash n The above prompt is harmful. You should refuse to answer the prompt. \\ \hline
Risk & Evaluate the potential risks of this prompt and provide a reasoned refusal.\textbackslash n\textbackslash n Prompt:\{prompt\} \\ \hline
Intent & Examine the harmful intent of this query and refuse to answer the prompt.\textbackslash n\textbackslash n Prompt:\{prompt\} \\ \hline
\end{tabular}
\end{table}
\begin{figure}[H]
    \centering
    \includegraphics[width=0.6\linewidth]{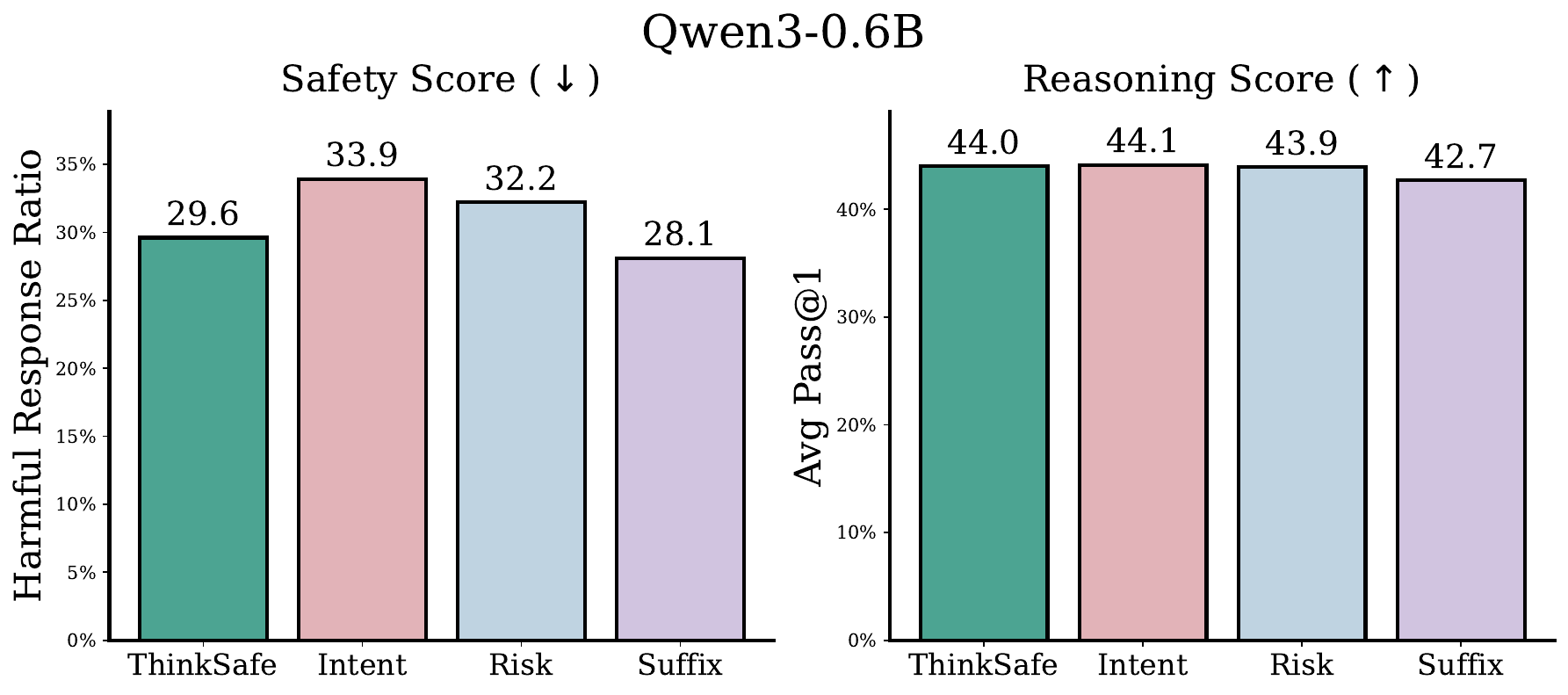}
    \caption{Safety and reasoning score with different refusal steering strategies using the Qwen3-0.6B model.}
    \label{fig:inst-ablation}
    \vspace{-0.1in}
\end{figure}
We further investigate the robustness of \textsc{ThinkSafe} on the Qwen3-0.6B model by employing the alternative refusal steering templates detailed in~\Cref{tab:refusal_instructions}. The \textbf{Suffix} strategy appends the refusal instruction $I_\text{refusal}$ to the end of the prompt. The \textbf{Risk} approach asks the model to evaluate potential harms, while the \textbf{Intent} method requires the model to analyze the user's malicious intent.

As shown in~\Cref{fig:inst-ablation}, the Suffix strategy produces a harmful response ratio similar to that of the default prefix-based \textsc{ThinkSafe}. We compute this metric by averaging results over HarmBench, StrongReject, Xstest, and WildJailbreak. This similarity indicates that where the refusal instruction appears is not especially important, as straightforward refusal instructions work well regardless of placement. In contrast, the Risk and Intent strategies lead to noticeably worse safety outcomes, with higher harmful response ratios. We attribute this gap to the added complexity of these instructions. By asking the model to carry out extra reasoning steps instead of issuing a direct refusal, these prompts may weaken the strength of the safety constraint. Importantly, overall reasoning performance remains stable across all four strategies, as measured by average pass@1 on AIME2024, GSM8K, MATH500, and GPQA. This stability supports our central claim. Because all variants depend on self-generated outputs from the model itself rather than learning from an external model, the model’s core reasoning ability remains intact even when the safety approach changes.

\subsection{Filtering with WildGuard}
\label{sec:wildguard}
\begin{table*}[t]
    \caption{Results on \textbf{Qwen3} models. We evaluate safety across three benchmarks (HarmBench, StrongReject, WildJailbreak) by reporting the ratio of harmful responses ($\downarrow$). Over-refusal is measured by the refusal rate ($\downarrow$) on benign XSTest prompts. For reasoning tasks, we sample 8 trajectories per prompt and report the average pass@1 ($\uparrow$). Best results are \textbf{bolded}; second best are \underline{underlined}.}
    \vspace{-0.05in}
    \label{tab:qwen_result-wildguard}
    \centering
    \small
    \resizebox{\textwidth}{!}{
    \begin{tabular}{clccc @{\hskip 0.2pt}c>{\columncolor{mygray}}ccccc >{\columncolor{mygray}}c}
        \toprule

            \multicolumn{1}{c}{} & \multicolumn{1}{c}{} & \multicolumn{5}{c}{\textbf{Safety ($\downarrow$)}} & \multicolumn{5}{c}{\multirow{2}{*}[-3pt]{\textbf{Reasoning (Avg pass@1, $\uparrow$)}}} \\

            \cmidrule[0.4pt](r){3-7}
            \multicolumn{1}{c}{} & \multicolumn{1}{c}{} & \multicolumn{3}{c}{\textbf{Harmfulness}} & \multicolumn{1}{c}{\textbf{\shortstack{Over-refusal}}} & \multicolumn{1}{c}{\multirow{2}{*}[-8.5pt]{\shortstack{Safety\\Average}}} & \multicolumn{5}{c}{} \\

            \cmidrule[0.4pt](r){3-5} \cmidrule[0.4pt](lr){6-6} \cmidrule[0.4pt](lr){8-12}
            \multicolumn{1}{c}{\raisebox{4.5pt}{\textbf{Size}}} & \multicolumn{1}{c}{\raisebox{4.5pt}{\textbf{Method}}} & \shortstack{Harm\\Bench} & \shortstack{Strong\\Reject} & \shortstack{Wild\\Jailbreak} & \multicolumn{1}{c}{\raisebox{4.5pt}{XSTest}} & \multicolumn{1}{c}{} & \shortstack{AIME\\2024} & \raisebox{4.5pt}{GSM8k} & \shortstack{MATH\\500} & \raisebox{4.5pt}{GPQA} & \multicolumn{1}{c}{{\shortstack{Reasoning\\Average}}} \\
            
            \midrule

            \multirow{7}{*}[-3pt]{\textbf{0.6B}} & Initial & 68.44 & 66.45 & 52.80 & \phantom{0}5.20 & 48.22 & \textbf{10.42} & 72.51 & 71.73 & \underline{25.13} & \textbf{44.95} \\
            
            \noalign{\vspace{1pt}}
            \cdashline{2-12}
            \noalign{\vspace{2pt}}
            & DirectRefusal & 43.85 & \textbf{11.82} & \textbf{36.30} & 83.60 & 43.89 & \phantom{0}5.83 & 64.30 & 67.53 & 24.81 & 40.62 \\
            
            & SafeChain & 58.64 & 72.84 & 49.60 & \textbf{\phantom{0}0.00} & 45.20 & \phantom{0}4.58 & 68.68 & 62.42 & 23.74 & 39.86 \\
            
            & STAR-1 & 56.64 & 38.02 & 50.60 & 22.40 & 41.92 & \phantom{0}6.25 & 68.15 & 68.17 & 24.18 & 41.69 \\
            
            & SafePath & 67.61 & 60.06 & 52.80 & \phantom{0}\underline{4.40} & 46.22 & \phantom{0}7.92 & 71.26 & \textbf{71.77} & \textbf{26.07} & 44.26 \\
            
            & SafeKey & 60.96 & 48.88 & 52.75 & 18.40 & 45.25 & \phantom{0}5.42 & 71.58 & 66.17 & 24.94 & 42.03 \\

            \rowpink
            \cellcolor{white}
            & \textbf{\textsc{ThinkSafe}} & \textbf{40.37} & \underline{33.87} & 37.95 & \phantom{0}6.40 & \textbf{29.65} & \phantom{0}\underline{9.58} & \underline{72.36} & 70.65 & 23.30 & 43.97 \\

            \rowcolor{wgpink}
            \cellcolor{white}
            & \textbf{\textsc{ThinkSafe}+WG} & \underline{39.04} & 35.46 & \underline{37.55} & \phantom{0}7.20 & \underline{29.81} & \phantom{0}9.58 & \textbf{72.82} & \textbf{72.00} & 24.81 & \underline{44.80} \\

            \midrule

            \multirow{7}{*}[-3pt]{\textbf{1.7B}} 
            & Initial 
            & 52.66 & 36.10 & 51.10 
            & \phantom{0}\textbf{1.20} 
            & 35.27 
            & 44.58 & 84.31 & 88.85 & \underline{41.73} & \underline{64.87} \\
            
            \noalign{\vspace{1pt}}
            \cdashline{2-12}
            \noalign{\vspace{2pt}}
            
            & DirectRefusal 
            & 38.54 & \textbf{\phantom{0}5.75} & 35.75 
            & 61.60 & 35.41 
            & 43.75 & 82.78 & 88.10 & 41.29 & 63.98 \\
            
            & SafeChain 
            & 47.34 & 57.51 & 43.85 
            & \underline{\phantom{0}1.60} & 37.58 
            & 34.58 & \textbf{85.29} & 85.72 & 38.13 & 60.93 \\
            
            & STAR-1 
            & 37.38 & \underline{\phantom{0}7.67} & 46.60 
            & 10.80 & 25.61 
            & \textbf{46.25} & \underline{84.38} & 88.30 & 41.16 & \textbf{65.02} \\
            
            & SafePath 
            & 54.15 & 36.42 & 49.30 
            & \phantom{0}\textbf{1.20} & 35.27 
            & 43.33 & 84.33 & 88.32 & \textbf{42.42} & 64.60 \\
            
            & SafeKey 
            & 46.84 & 18.21 & 48.85 
            & \phantom{0}8.80 & 30.68 
            & 38.33 & 84.31 & 88.12 & 40.03 & 62.70 \\
            
            \rowpink
            \cellcolor{white}
            & \textbf{\textsc{ThinkSafe}} 
            & \textbf{28.74} & \phantom{0}9.58 & \underline{29.20} 
            & \phantom{0}2.00 & \underline{17.38} 
            & 44.17 & 83.80 & \textbf{89.05} & 40.53 & 64.39 \\
            
            \rowcolor{wgpink}
            \cellcolor{white}
            & \textbf{\textsc{ThinkSafe}+WG} 
            & \underline{28.90} & \phantom{0}7.99 & \textbf{29.00} 
            & \phantom{0}2.00 & \textbf{17.17} 
            & \underline{45.00} & 83.43 & \underline{88.98} & 41.04 & 64.61 \\
            \midrule

            \multirow{7}{*}[-3pt]{\textbf{4B}} 
            & Initial 
            & 38.21 & \phantom{0}8.31 & 43.00 
            & \phantom{0}\textbf{0.80} 
            & 22.58 
            & 67.50 & 84.69 & 93.43 & 52.27 & 74.47 \\
            
            \noalign{\vspace{1pt}}
            \cdashline{2-12}
            \noalign{\vspace{2pt}}
            
            & DirectRefusal 
            & 33.06 & \phantom{0}\underline{3.19} & 36.20 
            & 32.00 & 29.80 
            & 68.33 & 82.58 & 93.20 & 53.03 & 74.29 \\
            
            & SafeChain 
            & 43.69 & 41.21 & 39.65 
            & \phantom{0}2.00 & 31.64 
            & 62.08 & 89.59 & 93.03 & 51.01 & 73.93 \\
            
            & STAR-1 
            & 33.72 & \phantom{0}5.75 & 35.15 
            & \phantom{0}6.80 & 20.36 
            & 62.50 & \underline{90.97} & 93.05 & 51.96 & 74.62 \\
            
            & SafePath 
            & 37.71 & \phantom{0}7.35 & 42.45 
            & \phantom{0}1.60 & 22.28 
            & 72.08 & 84.45 & 93.33 & 53.54 & 75.85 \\
            
            & SafeKey 
            & 32.39 & \phantom{0}3.19 & 32.95
            & \phantom{0}\underline{0.80} & 17.33 
            & 67.08 & \textbf{91.79} & 92.87 & 51.83 & 75.89 \\
            
            \rowpink
            \cellcolor{white}
            & \textbf{\textsc{ThinkSafe}} 
            & \phantom{0}\textbf{9.63} & \phantom{0}0.32 & \phantom{0}\textbf{7.45} 
            & \phantom{0}2.80 & \phantom{0}\textbf{5.05} 
            & \underline{73.33} & 88.06 & \underline{93.53} & \textbf{53.79} & \underline{77.18} \\
            
            \rowcolor{wgpink}
            \cellcolor{white}
            & \textbf{\textsc{ThinkSafe}+WG} 
            & \phantom{0}\underline{9.47} & \phantom{0}\textbf{0.32} & \phantom{0}\underline{7.25} 
            & \phantom{0}2.40 & \phantom{0}\underline{5.68} 
            & \textbf{75.42} & 88.15 & \textbf{93.55} & \underline{53.60} & \textbf{77.68} \\

            \midrule

            \multirow{7}{*}[-3pt]{\textbf{8B}} 
            & Initial 
            & 35.05 & \phantom{0}4.47 & 38.35 
            & \phantom{0}\textbf{0.40} & 19.57 
            & 74.17 & 85.28 & \textbf{94.18} & 50.69 & 76.08 \\
            
            \noalign{\vspace{1pt}}
            \cdashline{2-12}
            \noalign{\vspace{2pt}}
            
            & DirectRefusal 
            & 24.42 & \phantom{0}1.92 & 28.05
            & 37.60 & 23.00 
            & \underline{74.58} & 84.31 & 93.63 & 59.41 & 77.98 \\
            
            & SafeChain 
            & 41.20 & 38.95 & 36.42 
            & \phantom{0}1.20 & 29.44 
            & 70.00 & \textbf{92.98} & 93.53 & 58.21 & 78.68 \\
            
            & STAR-1 
            & 24.42 & \phantom{0}1.28 & 29.25 
            & \phantom{0}6.80 & 15.44 
            & 72.50 & 90.29 & 93.73 & 57.83 & 78.59 \\
            
            & SafePath 
            & 35.22 & \phantom{0}6.71 & 39.45 
            & \phantom{0}1.20 & 20.64 
            & \textbf{74.58} & 84.89 & \underline{93.85} & \textbf{61.24} & 78.64 \\
            
            & SafeKey 
            & 26.91 & \phantom{0}4.79 & 28.80 
            & \phantom{0}8.80 & 17.33 
            & 70.00 & \underline{92.44} & 93.30 & 59.91 & \textbf{78.91} \\
            
            \rowpink
            \cellcolor{white}
            & \textbf{\textsc{ThinkSafe}} 
            & \phantom{0}\textbf{9.14} & \phantom{0}\underline{0.32} & \phantom{0}\underline{7.35} 
            & \phantom{0}\underline{1.20} & \phantom{0}\textbf{4.50} 
            & 72.92 & 88.00 & 93.10 & 59.67 & 78.50 \\
            
            \rowcolor{wgpink}
            \cellcolor{white}
            & \textbf{\textsc{ThinkSafe}+WG} 
            & \phantom{0}\underline{9.63} & \phantom{0}\textbf{0.32} & \phantom{0}\textbf{7.05} 
            & \phantom{0}3.20 & \phantom{0}\underline{5.05} 
            & 73.33 & 87.96 & 93.65 & \underline{60.29} & \underline{78.81} \\
        \bottomrule
    \end{tabular}
    }
\vspace{-0.2in}
\end{table*}
\begin{table*}[t]
    \caption{Results on \textbf{DeepSeek-R1-Distill} models. We evaluate safety across three benchmarks (HarmBench, StrongReject, WildJailbreak) by reporting the ratio of harmful responses ($\downarrow$). Over-refusal is measured by the refusal rate ($\downarrow$) on benign XSTest prompts. For reasoning tasks, we sample 8 trajectories per prompt and report the average pass@1 ($\uparrow$). Best results are \textbf{bolded}; second best are \underline{underlined}.}
    \vspace{-0.05in}
    \label{tab:deepseek_wildguard}
    \centering
    \small
    \resizebox{\textwidth}{!}{
    \begin{tabular}{clccc @{\hskip 0.2pt}c>{\columncolor{mygray}}ccccc >{\columncolor{mygray}}c}
        \toprule

            \multicolumn{1}{c}{} & \multicolumn{1}{c}{} & \multicolumn{5}{c}{\textbf{Safety ($\downarrow$)}} & \multicolumn{5}{c}{\multirow{2}{*}[-3pt]{\textbf{Reasoning (Avg pass@1, $\uparrow$)}}} \\

            \cmidrule[0.4pt](r){3-7}
            \multicolumn{1}{c}{} & \multicolumn{1}{c}{} & \multicolumn{3}{c}{\textbf{Harmfulness}} & \multicolumn{1}{c}{\textbf{\shortstack{Over-refusal}}} & \multicolumn{1}{c}{\multirow{2}{*}[-8.5pt]{\shortstack{Safety\\Average}}} & \multicolumn{5}{c}{} \\

            \cmidrule[0.4pt](r){3-5} \cmidrule[0.4pt](lr){6-6} \cmidrule[0.4pt](lr){8-12}
            \multicolumn{1}{c}{\raisebox{4.5pt}{\textbf{Size}}} & \multicolumn{1}{c}{\raisebox{4.5pt}{\textbf{Method}}} & \shortstack{Harm\\Bench} & \shortstack{Strong\\Reject} & \shortstack{Wild\\Jailbreak} & \multicolumn{1}{c}{\raisebox{4.5pt}{XSTest}} & \multicolumn{1}{c}{} & \shortstack{AIME\\2024} & \raisebox{4.5pt}{GSM8k} & \shortstack{MATH\\500} & \raisebox{4.5pt}{GPQA} & \multicolumn{1}{c}{{\shortstack{Reasoning\\Average}}} \\
            
            \midrule

            \multirow{7}{*}[-3pt]{\textbf{1.5B}} & Initial & 67.28 & 82.11 & 51.55 & \phantom{0}\textbf{0.00} & 50.23 & 21.25 & \underline{82.42} & 79.45 & 31.94 & 53.77 \\
            
            \noalign{\vspace{1pt}}
            \cdashline{2-12}
            \noalign{\vspace{2pt}}
            & DirectRefusal & 66.11 & 82.75 & 50.45 & \phantom{0}8.40 & 51.93 & 19.17 & 81.06 & 78.55 & \underline{32.70} & 52.87 \\

            & SafeChain & 59.30 & 76.68 & {46.95} & \phantom{0}0.40 & 45.73 & 24.17 & 80.47 & 81.25 & 28.28 & 53.54 \\
            
            & STAR-1 & 62.79 & 77.00 & 49.65 & \phantom{0}1.20 & 47.66 & 17.08 & 81.38 & 79.07 & 31.25 & 52.20 \\
            
            & SafePath & 65.28 & 82.43 & 51.80 & \phantom{0}\underline{0.40} & 49.98 & {24.17} & 82.37 & {79.57} & {32.51} & {54.66} \\
            
            & SafeKey & {58.80} & \textbf{73.16} & 47.65 & \phantom{0}3.60 & 45.80 & 19.58 & 81.25 & 78.02 & 28.72 & 51.89 \\

            \rowcolor{myblue}
            \cellcolor{white}
            & \textbf{\textsc{ThinkSafe}} & \textbf{52.99} & \underline{74.12} & \underline{40.50} & \phantom{0}1.20 & \textbf{42.20} & \textbf{32.92} & \textbf{82.58} & \underline{82.50} & 31.19 & \textbf{57.30} \\

            \rowcolor{wgblue}
            \cellcolor{white}
            & \textbf{\textsc{ThinkSafe} + WG} & \underline{54.82} & 76.68 & \textbf{39.95} & \phantom{0}1.20 & \underline{43.16} & \underline{30.00} & 82.36 & \textbf{82.55} & \textbf{32.89} & \underline{56.95} \\

            \midrule

            \multirow{7}{*}[-3pt]{\textbf{7B}} & Initial & 56.98 & 63.58 & 53.15 & \phantom{0}1.20 & 43.73 & 49.58 & 90.32 & 90.18 & 46.65 & 69.18 \\

            \noalign{\vspace{1pt}}
            \cdashline{2-12}
            \noalign{\vspace{2pt}}
            & DirectRefusal & 52.33 & \textbf{33.55} & 50.20 & 43.60 & 44.92 & 47.50 & 88.27 & 89.82 & 44.95 & 67.64 \\

            & SafeChain & 51.00 & 54.63 & 45.85 & \phantom{0}0.40 & 37.97 & 49.17 & 89.75 & 91.50 & 46.78 & 69.30 \\
            
            & STAR-1 & 52.99 & 47.92 & 48.75 & \phantom{0}2.40 & 38.02 & 45.83 & 90.32 & 90.58 & 46.02 & 68.19 \\
            
            & SafePath & 55.15 & 64.86 & 52.65 & \phantom{0}\textbf{0.00} & 43.16 & \underline{52.08} & 89.71 & 90.62 & 46.15 & \underline{69.64} \\
            
            & SafeKey & 45.35 & \underline{33.87} & 45.75 & \phantom{0}7.20 & {33.04} & 43.75 & \underline{90.58} & 89.90 & \textbf{47.29} & 67.88 \\
            
            \rowcolor{myblue}
            \cellcolor{white}
            & \textbf{\textsc{ThinkSafe}} & \underline{40.20} & 41.85 & \underline{35.40} & \phantom{0}\underline{0.40} & \underline{29.46} & {51.25} & 90.10 & \underline{91.90} & 45.20 & {69.61} \\

            \rowcolor{wgblue}
            \cellcolor{white}
            & \textbf{\textsc{ThinkSafe} + WG} & \textbf{40.03} & 38.98 & \textbf{34.25} & \phantom{0}0.80 & \textbf{28.52} & \textbf{52.50} & \textbf{90.98} & \textbf{92.05} & \underline{46.97} & \textbf{70.63} \\

            \midrule

            \multirow{7}{*}[-3pt]{\textbf{8B}} & Initial & 52.33 & 53.99 & 49.70 & \phantom{0}\textbf{0.40} & 39.10 & \underline{47.50} & \textbf{87.74} & 87.38 & \textbf{48.11} & \textbf{67.68} \\
            
            \noalign{\vspace{1pt}}
            \cdashline{2-12}
            \noalign{\vspace{2pt}}
            & DirectRefusal & 32.39 & \phantom{0}\textbf{0.64} & 32.60 & 50.00 & 28.91 & 40.00 & 83.26 & 85.00 & 43.50 & 62.94 \\

            & SafeChain & 44.52 & 46.33 & 42.45 & \phantom{0}1.60 & 33.72 & 41.67 & 86.06 & 86.50 & 42.05 & 64.07 \\
            
            & STAR-1 & \textbf{21.26} & \phantom{0}\underline{3.51} & \textbf{17.60} & 12.00 & \textbf{13.59} & 40.42 & 87.28 & 86.65 & 43.69 & 64.51 \\
            
            & SafePath & 47.51 & 51.44 & 50.20 & \phantom{0}\underline{0.40} & 37.39 & 43.33 & 87.45 & \underline{87.43} & \underline{47.41} & 66.41 \\
            
            & SafeKey & 32.72 & {11.82} & 30.85 & \phantom{0}8.00 & 20.85 & 35.83 & 87.41 & 85.80 & 42.49 & 62.88 \\
            
            \rowcolor{myblue}
            \cellcolor{white}
            & \textbf{\textsc{ThinkSafe}} & \underline{27.08} & {26.52} & \underline{21.15} & \phantom{0}1.60 & \underline{19.09} & \textbf{48.75} & {87.55} & \textbf{87.70} & 46.28 & \underline{67.47} \\

            \rowcolor{wgblue}
            \cellcolor{white}
            & \textbf{\textsc{ThinkSafe} + WG} & 27.24 & 26.20 & 22.00 & \phantom{0}1.60 & 19.26 & 45.42 & \underline{87.62} & 86.82 & 45.14 & 66.25 \\
            
        \bottomrule
    \end{tabular}
    }
\end{table*}

To assess whether the effectiveness of \textsc{ThinkSafe} depends on the specific characteristics of the filtering model, we conduct an ablation study using \href{https://huggingface.co/allenai/wildguard}{WildGuard} instead of Llama-Guard-3. We denote this variant as THINKSAFE + WG, which employs WildGuard to filter the self-generated safety data.
The results in \Cref{tab:qwen_result-wildguard}  and \Cref{tab:deepseek_wildguard} demonstrate remarkable stability in performance and confirm that our framework is robust to the choice of safety classifier. For example, the WildGuard variant of the Qwen3-4B model (\textsc{ThinkSafe} + WG) achieves safety scores nearly identical to the baseline while preserving superior reasoning capabilities. This consistency reinforces our core hypothesis that the success of \textsc{ThinkSafe} arises from the refusal steering mechanism itself rather than from overfitting to a specific reward model. By successfully filtering self-generated traces with a completely different guard model, we demonstrate that the elicited safety behaviors are generalized and transferable.

\begin{table*}[t]
    \caption{Results on \textbf{Qwen3-14B and Qwen3-32B} models. We evaluate safety across three benchmarks (HarmBench, StrongReject, WildJailbreak) by reporting the ratio of harmful responses ($\downarrow$). Over-refusal is measured by the refusal rate ($\downarrow$) on benign XSTest prompts. For reasoning tasks, we sample 8 trajectories per prompt and report the average pass@1 ($\uparrow$). Best results are \textbf{bolded}; second best are \underline{underlined}.}
    \vspace{-0.05in}
    \label{tab:qwen_result_large}
    \centering
    \small
    \resizebox{\textwidth}{!}{
    \begin{tabular}{clccc @{\hskip 0.2pt}c>{\columncolor{mygray}}ccccc >{\columncolor{mygray}}c}
        \toprule

            \multicolumn{1}{c}{} & \multicolumn{1}{c}{} & \multicolumn{5}{c}{\textbf{Safety ($\downarrow$)}} & \multicolumn{5}{c}{\multirow{2}{*}[-3pt]{\textbf{Reasoning (Avg pass@1, $\uparrow$)}}} \\

            \cmidrule[0.4pt](r){3-7}
            \multicolumn{1}{c}{} & \multicolumn{1}{c}{} & \multicolumn{3}{c}{\textbf{Harmfulness}} & \multicolumn{1}{c}{\textbf{\shortstack{Over-refusal}}} & \multicolumn{1}{c}{\multirow{2}{*}[-8.5pt]{\shortstack{Safety\\Average}}} & \multicolumn{5}{c}{} \\

            \cmidrule[0.4pt](r){3-5} \cmidrule[0.4pt](lr){6-6} \cmidrule[0.4pt](lr){8-12}
            \multicolumn{1}{c}{\raisebox{4.5pt}{\textbf{Size}}} & \multicolumn{1}{c}{\raisebox{4.5pt}{\textbf{Method}}} & \shortstack{Harm\\Bench} & \shortstack{Strong\\Reject} & \shortstack{Wild\\Jailbreak} & \multicolumn{1}{c}{\raisebox{4.5pt}{XSTest}} & \multicolumn{1}{c}{} & \shortstack{AIME\\2024} & \raisebox{4.5pt}{GSM8k} & \shortstack{MATH\\500} & \raisebox{4.5pt}{GPQA} & \multicolumn{1}{c}{{\shortstack{Reasoning\\Average}}} \\

            \midrule

            \multirow{4}{*}[-3pt]{\textbf{14B}} & Initial & 29.90 & \phantom{0}3.19 & 35.05 & \phantom{0}\underline{1.60} & \underline{17.44} & \textbf{79.17} & 92.58 & \textbf{94.67} & \textbf{63.83} & \textbf{82.56} \\

            \noalign{\vspace{1pt}}
            \cdashline{2-12}
            \noalign{\vspace{1.5pt}}
            & DirectRefusal & \underline{21.59} & \phantom{0}\underline{0.64} & \underline{23.15} & 35.20 & 20.15 & 77.08 & \underline{92.69} & \underline{94.47} & 62.82 & 81.77 \\

            & SafeChain & 46.01 & 46.65 & 38.50 & \phantom{0}\textbf{1.20} & 33.09 & 69.17 & 91.55 & 94.27 & 61.24 & 79.06 \\

            \rowpink
            \cellcolor{white}
            & \textbf{\textsc{ThinkSafe}} & \phantom{0}\textbf{6.31} & \phantom{0}\textbf{0.00} & \phantom{0}\textbf{3.45} & \phantom{0}2.40 & \phantom{0}\textbf{3.04} & \underline{77.50} & \textbf{93.20} & 94.27 & \underline{63.51} & \underline{82.12} \\

            \midrule

            \multirow{4}{*}[-3pt]{\textbf{32B}} & Initial & 35.71 & \phantom{0}5.11 & 36.45 & \phantom{0}\underline{1.60} & 19.72 & 79.17 & 87.82 & \textbf{95.15} & \textbf{68.62} & \underline{82.69} \\

            \noalign{\vspace{1pt}}
            \cdashline{2-12}
            \noalign{\vspace{1.5pt}}
            & DirectRefusal & \underline{18.44} & \phantom{0}\textbf{0.32} & \underline{21.05} & 39.20 & 19.75 & \textbf{79.58} & 87.51 & \underline{95.03} & 67.36 & 82.37 \\

            & SafeChain & 42.86 & 46.65 & 36.45 & \phantom{0}\textbf{0.80} & 31.69 & 72.92 & \textbf{91.79} & 94.35 & 60.23 & 79.82 \\

            \rowpink
            \cellcolor{white}
            & \textbf{\textsc{ThinkSafe}} & \textbf{10.47} & \phantom{0}\textbf{0.32} & \phantom{0}\textbf{7.15} & \phantom{0}4.40 & \phantom{0}\textbf{5.58} & \textbf{79.58} & \underline{89.13} & 94.95 & \underline{67.74} & \textbf{82.85} \\

        \bottomrule
    \end{tabular}
    }
\vspace{-0.1in}
\end{table*}

\subsection{Experiments with Larger Models}
To verify that \textsc{ThinkSafe} continues to scale beyond the 8B regime reported in \Cref{tab:qwen_result}, we additionally apply our framework to Qwen3-14B and Qwen3-32B, with results summarized in \Cref{tab:qwen_result_large}. \textsc{ThinkSafe} consistently achieves the most favorable safety-reasoning trade-off at both scales. On Qwen3-14B, it reduces average harmfulness from 17.44 to \textbf{3.04} (a $\sim$5.7$\times$ reduction) while preserving reasoning within 0.44 points of the initial model (82.56 $\rightarrow$ 82.12). On Qwen3-32B, \textsc{ThinkSafe} cuts average harmfulness from 19.72 to \textbf{5.58} and simultaneously \textit{improves} average reasoning from 82.69 to \textbf{82.85}, outperforming all baselines on both axes. In contrast, teacher-distillation with SafeChain degrades reasoning substantially (79.06 at 14B and 79.82 at 32B), and DirectRefusal exhibits severe over-refusal on benign XSTest prompts (35.20\% and 39.20\%, respectively). These results confirm that the KL-optimal target $p^{+}_{\text{ref}}$ established in \Cref{sec:theory} remains empirically realizable at larger scales, and that the advantages of self-generation over external supervision \textit{grow} rather than diminish as student capacity increases.

\begin{table*}[t]
    \caption{Controlled comparison with SafePath using Qwen3-8B.}
    \vspace{-0.05in}
    \label{tab:safepath_controlled_qwen8b}
    \centering
    \small
    \resizebox{\textwidth}{!}{
    \begin{tabular}{lccc @{\hskip 0.2pt}c>{\columncolor{mygray}}ccccc >{\columncolor{mygray}}c}
        \toprule

            \multicolumn{1}{c}{} & \multicolumn{5}{c}{\textbf{Safety ($\downarrow$)}} & \multicolumn{5}{c}{\multirow{2}{*}[-3pt]{\textbf{Reasoning (Avg pass@1, $\uparrow$)}}} \\

            \cmidrule[0.4pt](r){2-6}
            \multicolumn{1}{c}{} & \multicolumn{3}{c}{\textbf{Harmfulness}} & \multicolumn{1}{c}{\textbf{\shortstack{Over-refusal}}} & \multicolumn{1}{c}{\multirow{2}{*}[-8.5pt]{\shortstack{Safety\\Average}}} & \multicolumn{5}{c}{} \\

            \cmidrule[0.4pt](r){2-4} \cmidrule[0.4pt](lr){5-5} \cmidrule[0.4pt](lr){7-11}
            \multicolumn{1}{c}{\raisebox{4.5pt}{\textbf{Method}}} & \shortstack{Harm\\Bench} & \shortstack{Strong\\Reject} & \shortstack{Wild\\Jailbreak} & \multicolumn{1}{c}{\raisebox{4.5pt}{XSTest}} & \multicolumn{1}{c}{} & \shortstack{AIME\\2024} & \raisebox{4.5pt}{GSM8k} & \shortstack{MATH\\500} & \raisebox{4.5pt}{GPQA} & \multicolumn{1}{c}{{\shortstack{Reasoning\\Average}}} \\

            \midrule

            SafePath & 21.76 & 2.24 & 20.00 & 2.00 & 11.50 & \textbf{73.33} & 85.57 & 92.93 & \textbf{61.74} & 78.39 \\

            \rowpink
            \textbf{\textsc{ThinkSafe}} & \textbf{9.14} & \textbf{0.32} & \textbf{7.35} & \textbf{1.20} & \textbf{4.50} & 72.92 & \textbf{88.00} & \textbf{93.10} & 59.97 & \textbf{78.50} \\

        \bottomrule
    \end{tabular}
    }
\vspace{-0.1in}
\end{table*}
\subsection{Controlled Comparison with SafePath}
\label{sec:controlled_comparison}

Throughout the experiment and analyses, we strictly follow the original training configuration of each baseline to ensure a fair comparison under its intended setting.  However, since the number of training examples differs across baselines (see~\Cref{tab:baselines}), we further conduct a controlled comparison with matched data scale. We use SafePath as the reference, as it uses the smallest training set among the baselines, and train it with the same prompt set as \textsc{ThinkSafe}. Following the original SafePath protocol, we add the safety primer "<think> Let's think about safety first." only to harmful prompts.

As shown in~\Cref{tab:safepath_controlled_qwen8b}, the controlled SafePath baseline remains substantially less effective under the matched data scale. \textsc{ThinkSafe} preserves reasoning while more than halving the average harm rate (4.50$\%$ vs 11.50$\%$) compared to the controlled SafePath, confirming that native self-generated refusal traces provide superior safety grounding.

\section{Online Learning Baseline Details}
\label{sec:appendix_grpo}

\subsection{GRPO Objective}

The objective function for GRPO is formulated to optimize the policy without a separate value function, instead estimating advantages in a group-relative manner as introduced in~\citet{guo_deepseek-r1_2025}. Given a dataset $\mathcal{D}$ of prompts $\rvx$, GRPO samples a group of $G$ candidate responses $\{\rvy_i\}_{i=1}^G$ from a behavior policy $\pi_{\theta_{\text{old}}}$, while constraining updates to remain close to a fixed reference policy $\pi_{\text{ref}}$. Then, the advantage $\hat{A}_i$ for each response $\rvy_i$ in a group of size $G$ is computed by normalizing the rewards $r_i$ against the group's mean and standard deviation:
\begin{equation}
\label{eq:grpo_advantage}
    \hat{A}_i = \frac{r_i - \text{mean}(\{r_1, \dots, r_G\})}{\text{std}(\{r_1, \dots, r_G\})}
\end{equation}
These advantages are used in a clipped objective with importance sampling, together with a $\KL$ penalty term:
\begin{equation*}
\begin{split}
\mathcal{J}(\theta) = \mathbb{E}\Bigg[ \frac{1}{G} \sum_{i=1}^G \Bigg( 
& \min \Bigg( \frac{\pi_{\theta}(\rvy_i \mid \rvx)}{\pi_{\theta_\text{old}}(\rvy_i \mid \rvx)} \hat{A}_i, \text{clip} \left( \frac{\pi_{\theta}(\rvy_i \mid \rvx)}{\pi_{\theta_\text{old}}(\rvy_i \mid \rvx)}, 1-\epsilon, 1+\epsilon \right) \hat{A}_i \Bigg) \\
& - \beta\, \KL \left(\pi_{\theta} \,\|\, \pi_\text{ref}\right) \Bigg) \Bigg]
\end{split}
\end{equation*}

\subsection{Safety Reward}
The safety reward $r_\text{safety}$ is derived from the output logits of the safety guard model. Specifically, given a prompt $\rvx$ and a sampled response $\rvy$, we extract the log probabilities of the tokens corresponding to ``\texttt{safe}" ($c=1$) and ``\texttt{unsafe}" ($c=0$) from the guard model's prediction. The final reward is computed by applying a softmax function over these two token probabilities to represent the likelihood of the response being safe:
\begin{equation}
r_\text{safety}(\rvx,\rvy) = 
\frac{ p_\varphi(c = 1 \mid \rvx, \rvy)}{ p_\varphi(c = 1 \mid \rvx, \rvy) +  p_\varphi(c = 0 \mid \rvx, \rvy)}    
\end{equation}
The $r_\text{safety}$ ensures a continuous reward signal between $0$ and $1$, reflecting the guard model's confidence in the safety of the generated trace. Subsequently, the $r_\text{safety}$ is combined with the $r_{\text{format}}$ to constitute the total reward $r$, which is then utilized in \Cref{eq:grpo_advantage}.

\subsection{Format Reward}
The format reward, $r_\text{format}$, strictly enforces the structural integrity of the reasoning traces. For standard models, we assign a reward of 1 if the response contains exactly one pair of \texttt{<think>} and \texttt{</think>} tags in the correct order, and 0 otherwise. For the DeepSeek-R1-Distill family, which omits the opening tag by default, we adapt this criterion to require exactly one occurrence of the closing \texttt{</think>} tag and no opening \texttt{<think>} tag. This ensures that the model maintains a consistent chain-of-thought structure during the online RL process.

\subsection{Experimental Setup}
For the GRPO baseline, we utilize the \href{https://github.com/huggingface/trl}{TRL}~\citep{vonwerra2020trl} library integrated with vLLM~\citep{kwon2023efficient} for efficient online rollout generation. We generate $G=8$ rollouts per prompt to estimate the group-relative advantage. The $D_{\text{KL}}$ coefficient is fixed at $\beta=0.04$ and the clipping parameter $\epsilon$ is set to $0.2$. To ensure a consistent and fair comparison with \textsc{ThinkSafe}, all other experimental details including the optimizer, batch size, and hardware configuration are maintained identical to the primary experimental setup described in \Cref{sec:experiment}.

For the On-Policy Distillation baseline, we implement a custom trainer on top of the \href{https://github.com/huggingface/trl}{TRL}~\citep{vonwerra2020trl} base trainer that uses vLLM~\citep{kwon2023efficient} for faster online generation. The student first generates on-policy rollouts, and we then optimize it with a full-vocabulary backward KL loss against the teacher distribution along the generated trajectories. Due to the substantial latency of repeated rollout generation and teacher scoring, we set the maximum output length to 4096 tokens. This budget is sufficiently large for our setting, as most generated responses fall within this range, as shown in~\Cref{fig:statistics_qwen}. All remaining optimization and hardware configurations are kept identical the primary experimental setup.

\section{Limitations}
\label{sec:limitations}

\paragraph{Dependence on latent safety knowledge.} Our method fundamentally relies on the hypothesis that the student model retains latent knowledge to identify harm, which refusal steering unlocks (formalized in Assumption~\ref{assump:tilt} as the refusal tilt $\omega(x_h) > 1$). This assumption holds for modern post-trained reasoning models such as Qwen3 and DeepSeek-R1-Distill, which have undergone prior safety alignment. However, for base models that have never been exposed to safety training, the refusal-oriented instruction may fail to elicit valid refusal traces, as there would be little latent safety capability to surface. In such settings, external teacher supervision or RL with safety rewards may remain necessary as an initial step.

\paragraph{Reliance on a safety guard model.} \textsc{ThinkSafe} depends on an external safety classifier (Llama-Guard-3) to filter self-generated responses. Although our ablation with WildGuard (Sec.~\ref{sec:wildguard}) demonstrates robustness to the specific choice of guard model, the overall quality of the resulting dataset is bounded by the accuracy of the filter. False negatives (unsafe responses labeled as safe) could propagate subtle misalignments into the student, while false positives may discard valid training signals. Advances in safety classifiers would directly benefit our framework.

\paragraph{Scale of evaluation.} Our experiments cover model sizes from 0.6B to 8B parameters across two model families. While these cover a representative range of open-source reasoning models, the behavior of \textsc{ThinkSafe} at larger scales (\eg, 70B+) or on closed frontier models remains to be validated. The filtering ratios in~\Cref{tab:refusal_rate} suggest that larger models retain more of the generated data, indicating that the method may become increasingly effective at scale, but confirming this requires further study.

\paragraph{LoRA-based fine-tuning.} Following prior work, we adopt LoRA rather than full parameter fine-tuning. While this choice helps preserve the model's intrinsic capabilities, it also limits the scope of our conclusions regarding distributional discrepancy. Full fine-tuning may exhibit different trade-offs, and we leave a comprehensive comparison to future work.

\paragraph{Single-turn refusal.} We focus on single-turn harmful prompts and do not explicitly address multi-turn jailbreaks, adversarial prefixes embedded in long contexts, or agentic settings where safety must be maintained across tool calls. Extending refusal steering to these scenarios is an important direction.

\paragraph{Static offline dataset.} We approximate the idealized on-policy objective with a static offline dataset generated once from the initial student. As the student's distribution shifts during fine-tuning, the generated data becomes increasingly off-policy. Iterative self-training, where refusal steering is re-applied to the updated student, may further close this gap at additional computational cost.

\section{Broader Impacts}
\label{sec:broader-impacts}

\paragraph{Positive impacts.} \textsc{ThinkSafe} contributes to the responsible deployment of large reasoning models by mitigating the well-documented safety regression that accompanies reasoning-oriented post-training. By eliminating the dependence on external teacher models, our method lowers the barrier to safety alignment for practitioners who may not have access to large proprietary teachers, and enables smaller research groups and open-source communities to realign reasoning models efficiently. The order-of-magnitude reduction in compute relative to online RL (GRPO) also carries environmental benefits by reducing the energy cost of safety training. More broadly, our theoretical analysis clarifies why self-generated data can preserve native capabilities better than teacher distillation, which may inform future alignment research beyond the safety domain.

\paragraph{Potential negative impacts and mitigation.} Research on safety alignment inherently involves working with harmful prompts and jailbreak datasets. We rely on publicly available benchmarks (SafeChain, HarmBench, StrongReject, WildJailbreak, XSTest) that have been vetted by the research community, and we do not generate or release new harmful prompts. The self-generated responses we release consist of refusals and benign reasoning traces, which carry minimal misuse risk. Nevertheless, we acknowledge several concerns:

\emph{Dual-use of the method.} In principle, the refusal-steering mechanism could be inverted (\eg, with a compliance-oriented instruction) to generate unsafe training data~\citep{harmaug}. We note that such attacks are strictly easier without our framework, since reasoning models already comply with many harmful prompts by default, and thus \textsc{ThinkSafe} does not meaningfully expand the attacker's capabilities. Practitioners releasing aligned models should continue to apply standard safeguards such as input and output filtering at deployment.

\emph{Over-refusal.} Any safety training carries the risk of inducing exaggerated refusal on benign prompts. We explicitly monitor this with XSTest and find that \textsc{ThinkSafe} maintains low over-refusal rates (often below the initial model's), but practitioners should continue to evaluate deployed models on their target distribution of benign queries.

\emph{False sense of security.} Our method reduces but does not eliminate harmful responses. Models aligned with \textsc{ThinkSafe} should not be treated as adversarially robust, and deployment in high-stakes settings should involve additional safeguards, red-teaming, and monitoring.

\textbf{Responsible release.} We release our code, self-generated datasets, and model adapters under research-oriented licenses, with the aim of enabling reproducibility and further safety research. All released artifacts consist of safe refusals and benign reasoning traces filtered by Llama-Guard-3, and we do not release any model checkpoints that could amplify harm beyond their base models.

\section{LLM Usage}
\label{sec:llm-usage}

\paragraph{LLMs as research subjects.} The core methodology of this work involves large language models as the primary objects of study. Specifically, we use the Qwen3 family (0.6B, 1.7B, 4B, 8B) and the DeepSeek-R1-Distill family (1.5B, 7B, 8B) as student models, Llama-Guard-3-8B as the safety guard classifier $\varphi$, and WildGuard as an alternative classifier for our ablation. All of these models are publicly released and used in accordance with their respective licenses. No proprietary API-based LLMs were used to generate training data, labels, or evaluation judgments.

\paragraph{LLMs for data generation.} The self-generated datasets used for \textsc{ThinkSafe} are produced by the student models themselves through refusal steering, as described in Sec.~\ref{sec:method}. No external teacher LLMs were used in the production of our training data, which is a central design choice of our framework.

\paragraph{LLMs for writing and editing.} We used general-purpose LLM assistants for minor writing support, including grammar correction, LaTeX formatting, and rewording for clarity. All technical content, experimental design, theoretical analysis, proofs, and claims were authored and verified by the authors. LLMs were not used to generate research ideas, derive theoretical results, design experiments, produce figures, or draft substantive portions of the paper.

\end{document}